\newtheorem{theorem}{Theorem}
\newtheorem{lemma}{Lemma}
\newtheorem{corollary}{Corollary}
\title{On the Expressive Power of Self-Attention Matrices}
\author{%
  Valerii Likhosherstov
  \thanks{equal contribution}\\
  Google Brain \& University of Cambridge\\
  \texttt{vlikhosherstov@google.com}\\
   \And
  Krzysztof Choromanski
  \footnotemark[1]\\
  Google Brain \\
   \And
  Adrian Weller\\
  University of Cambridge \& The Alan Turing Institute\\
}
\begin{document}

\maketitle

\begin{abstract}
    Transformer networks are able to capture patterns in data coming from many domains (text, images, videos, proteins, etc.) with little or no change to architecture components. We perform a theoretical analysis of the core component responsible for signal propagation between elements, i.e. the self-attention matrix. In practice, this matrix typically exhibits two properties: (1) it is sparse, meaning that each token only attends to a small subset of other tokens; and (2) it changes dynamically depending on the input to the module. With these considerations in mind, we ask the following question: \textbf{Can a fixed self-attention module approximate arbitrary sparse patterns depending on the input? How small is the hidden size $\boldsymbol{d}$ required for such approximation?} We make progress in answering this question and show that the self-attention matrix can provably approximate sparse matrices, where sparsity is in terms of a bounded number of nonzero elements in each row and column. While the parameters of self-attention are fixed, various sparse matrices can be approximated by only modifying the inputs. Our proof is based on the random projection technique and uses the seminal Johnson-Lindenstrauss lemma. Our proof is constructive, enabling us to propose an algorithm for finding adaptive inputs and fixed self-attention parameters in order to approximate a given matrix. In particular, we show that, in order to approximate any sparse matrix up to a given precision defined in terms of preserving matrix element ratios, \textbf{$\boldsymbol{d}$ grows only logarithmically with the sequence length} $\boldsymbol{L}$ (i.e. $\boldsymbol{d = O(\log L)}$).
\end{abstract}

\section{Introduction}

Transformer networks have demonstrated strong performance 
in the area of large-scale deep learning, coming close to or beating the state of the art 
in a wide range of tasks. 
Initially proposed in the context of neural machine translation \cite{transformer}, Transformers were found to generalize well across a variety of natural language processing tasks when pretrained on large text corpora \cite{bert,gpt-2,gpt-3}. These successes facilitated the application of Transformers in other domains. For instance, in biology, Transformers pretrained on large corpora of proteins were shown to predict proteins' structure and function \cite{endtoend,strfunc}, and to generate protein sequences with specific properties \cite{progen}. Another exciting advancement was the emergence of Vision Transformers \cite{vit} and, later, Video Vision Transformers \cite{vivit}. Thus,  Transformers appear domain-agnostic and can learn any priors once a suitable large-scale dataset is provided. Finally, Transformers were recently shown to be applicable for end-to-end training on large-scale multimodal data of images with textual annotations extracted from the Internet \cite{clip,align}. The resulting models are highly generalizable and perform very well in zero-shot classification from scratch, and when fine-tuned on standard benchmarks of a smaller scale.

The omnivorous nature of these models suggests that Transformers and their core component, self-attention, have an inherent ability to capture useful patterns 
in the data regardless of the domain. A thorough 
analysis is required to gain a deeper understanding of this remarkable phenomenon. We take a step in this direction by analyzing the expressiveness of the self-attention module.

In self-attention, dependencies between elements of the input are propagated via a self-attention matrix, which can be thought of as an input-dependent linear projection applied to the input. By the definition, this right stochastic matrix (i.e. having nonnegative elements with rows summing up to $1$) encodes input-dependent patterns in the data. Therefore, we aim to analyze the expressiveness of this matrix, to understand how flexible these input-dependent patterns can be. Importantly, we consider the setup when \textbf{the hidden dimension of self-attention $\boldsymbol{d}$ is much smaller than the sequence length $\boldsymbol{L}$} aiming to characterize relationships between the two. Small $d$ is important in practice, because it facilititates computational efficiency of Transformers, which are notorious for their high compute demand and $\mathrm{CO}_{2}$ footprint \cite{co2}.

\begin{figure}[t!]
    \centering
    \includegraphics[width=0.9\textwidth]{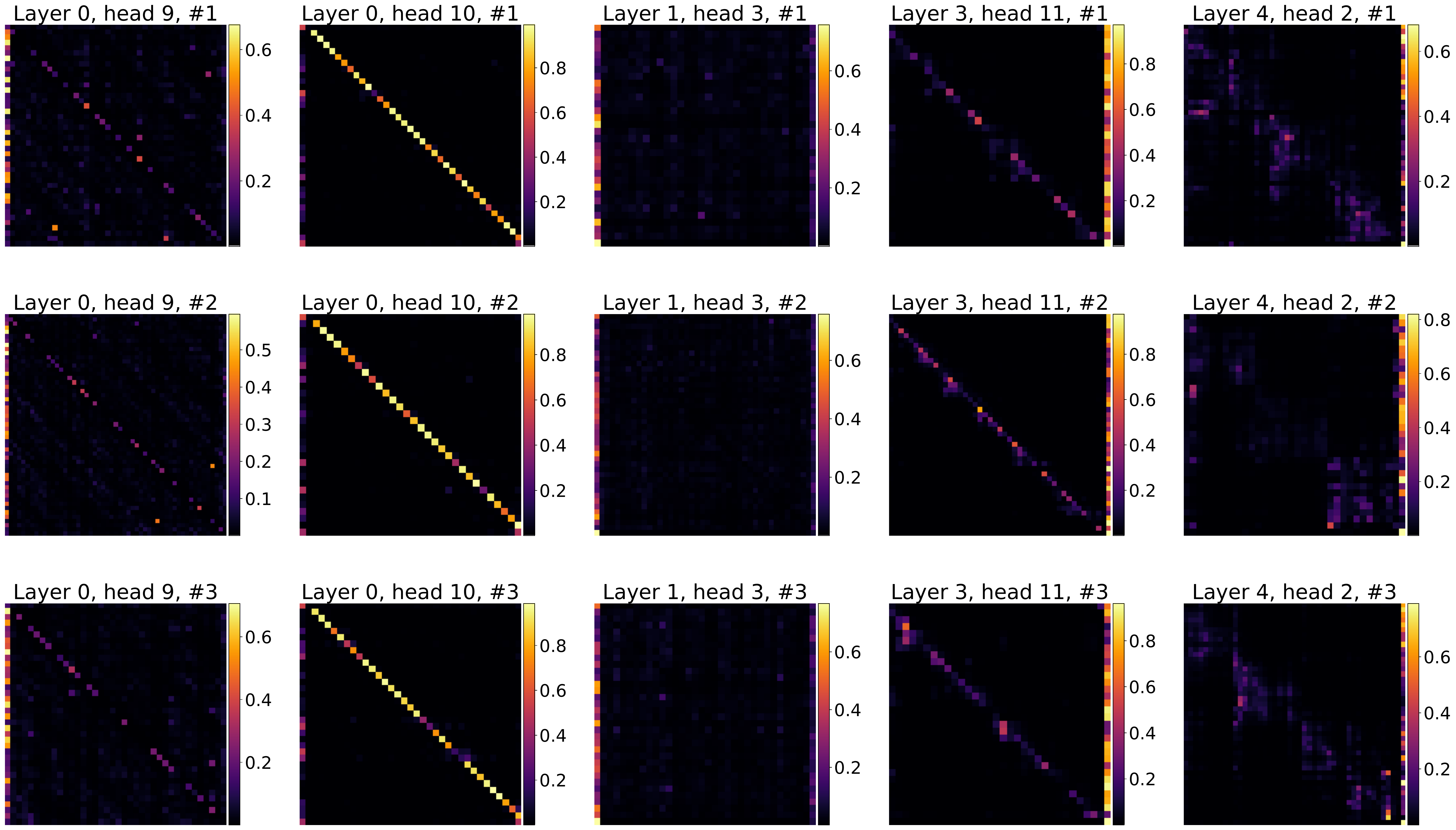}
    \caption{Examples of self-attention matrices appearing in the trained DistilBERT model \cite{distilbert}. Each column corresponds to a randomly chosen self-attention module defined by layer and attention head in the model. Cells in each column correspond to realisations of self-attention matrix for randomly sampled input sentences from the text corpus. We observe that self-attention matrices are 1) sparse and 2) the sparsity pattern depends on the input.}
    \label{fig:distilbert}
\end{figure}

While it is hard to tackle the most general setup of approximating \textit{any} possible right stochastic matrix with self-attention when $d \ll L$ (the case when $d \geq L$ was considered in \cite{lrbottleneck}), we narrow down the scope by making the following reasonable assumptions, often observed in practice:
\begin{enumerate}
    \item Self-attention matrices are approximately \textit{sparse}, meaning that most of the elements of the matrix are near zero. In other words, each token of the output only depends on a small number of input tokens. For instance, in neural machine translation, output words usually depend on a short context near the word they translate.
    \item Self-attention matrices are \textit{dynamic}, meaning that, with the fixed trained weights, sparsity patterns change depending on the input to the module. In our analysis we should, therefore, consider flexibility of the self-attention matrices assuming that weights are fixed.
\end{enumerate}

These assumptions typically hold in practice, as illustrated in Figure \ref{fig:distilbert}. The dynamic sparsity assumption, in particular, has provided insight for a series of results related to fast computation of self-attention for long sequences \cite{reformer,clustered,routing}. Consequently, two questions of interest are: \textbf{Can a fixed self-attention module approximate arbitrary sparse patterns depending on the input? How small is the latent dimension $\boldsymbol{d}$ required for such approximation?}

We make progress in addressing these questions by theoretically showing that there exist self-attention weights such that, when the precision of approximation is fixed, \textbf{$d$ grows only logarithmically with the sequence length $L$} to approximate any sparse matrix by only changing the input to the module. Here, the approximation precision is defined in terms of preserving attention weight ratios and sparsity is characterized by the bounded number of nonzero elements in each row and column. Our proof method uses random projection techniques and the seminal Johnson-Lindenstrauss lemma.

We commence by defining the self-attention module and self-attention matrix. After that, we formulate the main theoretical result of the paper and proceed with the proof. Finally, we present experimental simulations, discuss related work and make concluding remarks.


\section{Prerequisites: self-attention module}

Let $L$ be the length of a processed sequence and $d_{hid}$ be the size of a hidden representation passed through the neural network. We define the \textit{unnormalized self-attention matrix} as a parametrized mapping from the current hidden state $X \in \mathbb{R}^{L \times d_{hid}}$ into $\mathbb{R}^{L \times L}$. The mapping depends on two learnable parameter matrices $W_\mathcal{Q}, W_\mathcal{K} \in \mathbb{R}^{d_{hid} \times d}$, $d \leq d_{hid}$, and is defined as
\begin{equation} \label{eq:usam}
    \mathrm{USAM} (X; d, W_\mathcal{Q}, W_\mathcal{K}) = \exp (X W_\mathcal{Q} W_\mathcal{K}^\top X^\top),
\end{equation}
where $\exp (\cdot)$ is an elementwise exponent. Next, we define the \textit{(normalized) self-attention matrix} as
\begin{equation} \label{eq:sam}
    \mathrm{SAM} (X; d, W_\mathcal{Q}, W_\mathcal{K}) = \mathrm{diag} (\mathcal{M} \mathbf{1}_L)^{-1} \mathcal{M}, \quad \mathcal{M} = \mathrm{USAM} (X; d, W_\mathcal{Q}, W_\mathcal{K}).
\end{equation}
Here, $\mathbf{1}_L \in \mathbb{R}^L$ is a vector of ones and $\mathrm{diag} (\cdot)$ returns a diagonal matrix from a vector. 
The self-attention matrix is a row-normalized version of the unnormalized self-attention matrix.

Finally, \textit{self-attention} is defined as a parametrized mapping from $X$ into $\mathbb{R}^{L \times d}$ with parameters $W_\mathcal{Q}, W_\mathcal{K}, W_\mathcal{V} \in \mathbb{R}^{d_{hid} \times d}$. It has a form:
\begin{equation}
    \mathrm{SA} (X; d, W_\mathcal{Q}, W_\mathcal{K}, W_\mathcal{V}) = \mathrm{SAM} (X; d, W_\mathcal{Q}, W_\mathcal{K}) X W_\mathcal{V}.
\end{equation}
By the definition, all (normalized) self-attention matrices (\ref{eq:sam}) are \textit{right stochastic}, since their rows are nonnegative and sum to $1$.

Self-attention has the form of a differentiable dictionary, where the output at each position $1 \leq l \leq L$ is a sum of all values $W_\mathcal{V}^\top X_{l'}$, $1 \leq l' \leq L$, weighted proportionally to exponentiated dot products of the query $W_\mathcal{Q}^\top X_l$ and the key vectors $W_\mathcal{V}^\top X_{l'}$. Usually \cite{transformer}, these dot product are also divided by $\sqrt{d}$, since this empirically facilitates stable training. Without loss of generality, we do not include this linear scaling factor in our definition (\ref{eq:usam}), since it can be fused into one of the matrices $W_\mathcal{Q}$ or $W_\mathcal{K}$.

\section{Approximating sparse matrices by self-attention matrix} \label{sec:main}

\subsection{The main result} \label{sec:mainres}

We will call the square matrix \textit{$k$-nonzero-bounded}, if for each row or column of the matrix, the total number of nonzero elements is no more than $k$.

Apart from the notion of the bounded number of nonzero elements, we also  define matrices with elements of a bounded variation. For $\gamma \geq 1$, we call the matrix $A \in \mathbb{R}^{L \times L}$ with nonnegative elements \textit{$\gamma$-variation-bounded}, if for every row index $1 \leq i \leq L$ and every column indices $1 \leq j_1, j_2 \leq L$ such that $A_{i, j_1}, A_{i, j_2} \neq 0$,
\begin{equation}
    \gamma^{-1} \leq \frac{A_{i, j_1}}{A_{i, j_2}} \leq \gamma.
\end{equation}
For instance, all nonzero entries of a $1$-variation-bounded matrix are the same for each row of the matrix.

The following theorem is the main result of this paper:
\begin{theorem} \label{th:1}
Let $L > 1, k, d_{hid} \leq 2 L, d \leq d_{hid}$ be natural numbers, $d$ be even, $0 < \epsilon_1 < 1$, $0 < \epsilon_2 < \sqrt{2}$, $\gamma \geq 1$ be real numbers,
\begin{equation} \label{eq:dhidb}
    d \geq 32 \epsilon_2^{-2} k^2 \max (\log \gamma - \log \epsilon_1 + \epsilon_2, 1)^2 (2 \log L + \log (L - 1) + \log 2).    
\end{equation}
Then there exist $W_\mathcal{Q}, W_\mathcal{K} \in \mathbb{R}^{d_{hid} \times d}$, such that for any right stochastic, $k$-nonzero-bounded, $\gamma$-variation-bounded matrix $A \in \mathbb{R}^{L \times L}$, there is $X \in \mathbb{R}^{L \times d_{hid}}$ and $M = \mathrm{SAM} (X; d, W_\mathcal{Q}, W_\mathcal{K})$ satisfying
\begin{enumerate}
    \item For all row indices $1 \leq i \leq L$ and column indices $1 \leq j_1, j_2 \leq L$ such that $A_{i,j_1} = 0, A_{i, j_2} \neq 0$, it holds that
    \begin{equation} \label{eq:th1r1}
        \frac{M_{i,j_1}}{M_{i, j_2}} < \epsilon_1 ;
    \end{equation}
    \item For all row indices $1 \leq i \leq L$ and column indices $1 \leq j_1, j_2 \leq L$ such that $A_{i,j_1} \neq 0, A_{i, j_2} \neq 0$, it holds that
    \begin{equation} \label{eq:th1r2}
        \frac{A_{i,j_1}}{A_{i, j_2}} \cdot \exp(- \epsilon_2) < \frac{M_{i,j_1}}{M_{i, j_2}} < \frac{A_{i,j_1}}{A_{i, j_2}} \cdot \exp (\epsilon_2) .
    \end{equation}
\end{enumerate}
$W_\mathcal{Q}, W_\mathcal{K}$ can be constructed in $O(d_{hid} \cdot d)$ time. For any $A$, $X$ and $M$ can be computed in randomized time polynomial in $L, d_{hid}, k$.
\end{theorem}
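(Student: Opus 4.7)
The plan is to realize $A$ exactly through inner products in a high-dimensional auxiliary space, then compress to $\mathbb{R}^d$ by a single fixed Johnson--Lindenstrauss (JL) projection that is baked into the construction of $X$. Writing $M_{i,j} = \exp(\langle q_i, k_j\rangle)/Z_i$ with $q_i := (XW_\mathcal{Q})_i^\top$, $k_j := (XW_\mathcal{K})_j^\top$, both conclusions of the theorem become statements on the differences $\langle q_i, k_{j_1} - k_{j_2}\rangle$: they must approximate $\log(A_{i,j_1}/A_{i,j_2})$ up to additive $\epsilon_2$ on the nonzero support of row $i$, and must be negative enough off of it that the ratio drops below $\epsilon_1$. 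Row-stochasticity together with the $\gamma$-variation bound forces the $n_i \leq k$ nonzeros of row $i$ to lie in $[1/(n_i\gamma), \gamma/n_i]$, so the centered logs $\tilde a_{i,j} := \log(n_i A_{i,j})$ satisfy $|\tilde a_{i,j}| \leq \log\gamma$; the additive shift $\log n_i$ is absorbed by SAM's row normalization.

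For the high-dimensional construction I would work in $\mathbb{R}^{L+1}$ with basis $\{e_0, e_1, \dots, e_L\}$, and given $A$ set $q_i^{(H)} := e_0 + e_i$ and $k_j^{(H)} := -R\,e_0 + \sum_{i \in T_j}(\tilde a_{i,j} + R)\,e_i$, where $T_j := \{i : A_{i,j} \neq 0\}$ and $R := \max(\log\gamma - \log\epsilon_1 + \epsilon_2, 1)$. A one-line computation gives $\langle q_i^{(H)}, k_j^{(H)}\rangle = \tilde a_{i,j}$ if $j \in S_i$ and $-R$ otherwise, so both conclusions of the theorem hold \emph{exactly} in this setting: the off-support ratio is at most $\exp(-R + \log\gamma) \leq \epsilon_1 e^{-\epsilon_2}$, leaving a slack of $\epsilon_2$ for the subsequent JL perturbation.

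The key technical step, and the place where I expect the main obstacle to sit, is applying JL so that the factors $k^2$ and $R^2 = M^2$ in (\ref{eq:dhidb}) come out cleanly. I would not apply JL to the $A$-dependent family $\{q_i^{(H)}, k_j^{(H)}\}$ but to the $L+1$ fixed atoms $e_0, \dots, e_L$. For a random Gaussian $G \in \mathbb{R}^{d \times (L+1)}$ of appropriate scale, the union bound over the $\binom{L+2}{2}$ atomic pairs yields, with positive probability, $|\langle Ge_a, Ge_b\rangle - [a=b]| \leq \epsilon$ for all $a,b$; this is where the factor $2\log L + \log(L-1) + \log 2$ in (\ref{eq:dhidb}) enters. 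Expanding $\langle G q_i^{(H)}, G k_j^{(H)}\rangle$ bilinearly produces $\langle q_i^{(H)}, k_j^{(H)}\rangle + \sum_{a,b}(q_i^{(H)})_a (k_j^{(H)})_b (\langle Ge_a, Ge_b\rangle - [a=b])$, whose perturbation is at most $\epsilon\,\|q_i^{(H)}\|_1 \|k_j^{(H)}\|_1 \leq 2(2k+1)R\,\epsilon = O(\epsilon kR)$, using that $q_i^{(H)}$ has two unit-weight coordinates and $k_j^{(H)}$ has at most $k+1$ coordinates of magnitude bounded by $2R$. Choosing $\epsilon$ of order $\epsilon_2/(kR)$ makes this $\leq \epsilon_2/2$, and the required $d = O((\log L)/\epsilon^2) = O(k^2 R^2 \log L / \epsilon_2^2)$ matches (\ref{eq:dhidb}) up to absolute constants. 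A more naive route (distance-preserving JL plus polarization on the constructed vectors) would cost an extra factor of $M^2$, which is why this atomic $\ell_1$ bookkeeping is essential.

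To assemble a valid SAM module, I would take $d_{hid} \geq 2d$, set $W_\mathcal{Q} := \begin{pmatrix} I_d \\ 0 \end{pmatrix}$ and $W_\mathcal{K} := \begin{pmatrix} 0 \\ I_d \end{pmatrix}$ (both writable in $O(d_{hid}\, d)$ time and independent of $A$), and for each input $A$ let $X_i := (G q_i^{(H)}, G k_i^{(H)})$ zero-padded to $\mathbb{R}^{d_{hid}}$. Then $(XW_\mathcal{Q})_i = Gq_i^{(H)}$ and $(XW_\mathcal{K})_i = Gk_i^{(H)}$; both $q_i^{(H)}$ and $k_j^{(H)}$ are read off from the sparsity pattern of $A$ and then multiplied by the fixed precomputed $G$, all in randomized polynomial time in $L, d_{hid}, k$. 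Combining the exact high-dimensional identities with the $\epsilon_2/2$ JL perturbation, and using the slack built into $R$, then yields (\ref{eq:th1r1}) and (\ref{eq:th1r2}) by direct calculation.
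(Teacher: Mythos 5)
Your route is genuinely different from the paper's and is sound for the qualitative claim $d=O(\log L)$, but it does not establish the theorem with its stated quantitative constants. The paper does not build explicit sparse query/key vectors in $\mathbb{R}^{L+1}$; instead it forms the shifted log-matrix $B_{i,j}=\log A_{i,j}-\log A^{\min nz}_i-\log\epsilon_1+\epsilon_2$ (zero off the support), takes an SVD $B=DV^\top$, and compresses $D$ and $V$ by a single random Stiefel (orthogonal) projection into two $L\times(d/2)$ blocks that are packed side by side into the first $d$ columns of $X$, with $W_\mathcal{Q},W_\mathcal{K}$ fixed selector/shift matrices. The sparsity enters through the bound $\sigma_1(B)\le\|B\|_1\|B\|_\infty^{1/2}\cdots\le k\max(\log(\gamma/\epsilon_1)+\epsilon_2,1)$, so the relevant JL scale is $\|D_i\|_2\|V_{j_1}-V_{j_2}\|_2=\sqrt{2}\,\sigma_i\le\sqrt{2}\,kR$; your sparsity enters through $\|k_j^{(H)}\|_1\le(2k+1)R$ and $\|q_i^{(H)}\|_1=2$. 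Both mechanisms yield $d=O(\epsilon_2^{-2}k^2R^2\log L)$ with the same $R=\max(\log\gamma-\log\epsilon_1+\epsilon_2,1)$, and your version has the attractive feature that the good event (preservation of the $O(L^2)$ atomic inner products) is independent of $A$, whereas the paper's good event depends on $A$ through the SVD and only holds with probability $>(L+2)^{-1}$, requiring $O(L)$ redraws per input.

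Two concrete shortfalls. First, the theorem asserts the explicit constant $32$ in (\ref{eq:dhidb}), and the paper's proof is calibrated to exactly that constant (deviation scale $\sqrt{2}kR$, union over $L^2(L-1)$ triples, tail $2\exp(-m\epsilon^2/8)$ with $m=d/2$). Your $\ell_1$ bookkeeping gives a per-inner-product perturbation of $\epsilon\cdot 2(2k+1)R$, forcing $\epsilon=\Theta(\epsilon_2/(kR))$ with a prefactor several times smaller than the paper's, which inflates the required $d$ by a large absolute factor (roughly an order of magnitude); "up to absolute constants" is not enough to prove the statement as written. Second, your assembly places $Gq_i^{(H)}$ and $Gk_i^{(H)}$, each in $\mathbb{R}^d$, into disjoint blocks of $X_i$, which needs $d_{hid}\ge 2d$; the theorem only grants $d\le d_{hid}$ (indeed $d_{hid}$ may equal $d$). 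You must instead project to $\mathbb{R}^{d/2}$ and pack the two halves into $d$ columns as the paper does (this is why $d$ is assumed even), costing another factor of $2$ in the bound. A minor point: after spending the full $\epsilon_2$ slack, your off-support ratio bound closes to $\le\epsilon_1$ rather than the required strict $<\epsilon_1$; keep the JL inequalities strict or reserve a sliver of slack.
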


Informally, Theorem \ref{th:1} states that for hidden sizes $d_{hid}, d$ \textbf{growing only logarithmically} with the sequence length $L$, there exist \textbf{fixed} parameter matrices $W_\mathcal{Q}, W_\mathcal{K}$ such that for any nonzero-bounded matrix $A$ there is a self-attention input $X$ such that $M = \mathrm{SAM} (X; d, W_\mathcal{Q}, W_\mathcal{K})$ approximates $A$ very well. The quality of approximation is characterized by upper and lower bounds on ratios of elements located in the same row of $M$:
\begin{enumerate}
    \item Equation (\ref{eq:th1r1}) means that zero elements of $A$ are approximated by elements of $M$ which are small compared to nonzero elements of the same row when $\epsilon_1$ is chosen small. By definition $M$ is a strictly positive matrix, therefore in principle we can only approximate zero elements of $A$ by very small positive numbers.
    \item Equation (\ref{eq:th1r2}) means that ratios of nonzero elements of the same row in $M$ are in a close multiplicative neighborhood of the corresponding ratios in $A$ when $\epsilon_2$ is chosen small. Since rows of both matrices $A$ and $M$ sum up to $1$, similar enough ratios of element pairs also imply element similarity in terms of their absolute magnitude.
\end{enumerate}

Finally, as the proof is constructive, we will obtain an algorithm for computing $W_\mathcal{Q}, W_\mathcal{K}$, which turn out to be matrices of a simple structure. For any $A$ from the theorem statement, the probabilistic algorithm induced by the proof enables $X$ and $M$ to be computed in randomized polynomial time in $L, d_{hid}, k$.

In the rest of the section we describe the detailed proof and intuition behind it.

\subsection{Proof of Theorem \ref{th:1}: matrix $B$ and the intuition behind the proof}

Define vector $A^{\min nz} \in \mathbb{R}^L$ so that for each row index $1 \leq i \leq L$, the minimal nonzero element in this row is $A^{\min nz}_i$. Define matrix $B \in \mathbb{R}^{L \times L}$ as follows. For all $1 \leq i, j \leq L$,
\begin{equation} \label{eq:bdef}
    B_{i,j} = \begin{cases}
    0 & \text{if } A_{i,j} = 0 ; \\
    \log A_{i,j} - \log A^{\min nz}_i - \log \epsilon_1 + \epsilon_2 & \text{otherwise}.
    \end{cases}
\end{equation}

Observe, that $C = \epsilon_1 \exp (-\epsilon_2) \mathrm{diag} (A^{\min nz}) \exp (B)$ can be thought of as an approximation of $A$:
\begin{equation} \label{eq:cdef}
    C = \epsilon_1 \exp (- \epsilon_2) \mathrm{diag} (A^{\min nz}) \exp (B) \approx A.
\end{equation}
Indeed, for any $1 \leq i, j \leq L$ such that $A_{i,j} \neq 0$, $C_{i,j} = A_{i,j}$ by definition of $B$ and $C$ (Equations \ref{eq:bdef}, \ref{eq:cdef}). On the other hand, when $A_{i,j} = 0$, $C_{i,j} = \epsilon_1 \exp(- \epsilon_2) A^{\min nz}_i \leq \epsilon_1$, where we use $\epsilon_2 > 0$ and $A^{\min nz}_i \leq 1$, since $A$ is a right stochastic matrix. Hence, smaller $\epsilon_1$ yields a better approximation (\ref{eq:cdef}).

Suppose we find $X, W_\mathcal{Q}, W_\mathcal{K}$ such that
\begin{equation} \label{eq:appr2}
    B \approx X W_\mathcal{Q} W_\mathcal{K}^\top X^\top .
\end{equation}

Intuitively, if the approximation (\ref{eq:appr2}) is sufficiently good 
then
\begin{gather}
    \mathrm{USAM} (X; d, W_\mathcal{Q}, W_\mathcal{K}) = \exp (X W_\mathcal{Q} W_\mathcal{K}^\top X^\top) \approx \exp (B) \nonumber \\
    = \epsilon_1^{-1} \exp(\epsilon_2) \mathrm{diag} (A^{\min nz})^{-1} C \approx \epsilon_1^{-1} \exp(\epsilon_2) \mathrm{diag} (A^{\min nz})^{-1} A , \label{eq:appr3}
\end{gather}
where in the last transition we use (\ref{eq:cdef}). Since the unnormalized self-attention matrix $\mathrm{USAM} (X; d, W_\mathcal{Q}, W_\mathcal{K})$ is a good approximation for $A$ with rescaled rows (recall $\epsilon_1^{-1} \exp(\epsilon_2) \mathrm{diag} (A^{\min nz})^{-1}$ multipliers), the normalized self-attention matrix $\mathrm{SAM} (X; d, W_\mathcal{Q}, W_\mathcal{K})$ should be a good approximation for $A$, which is itself row-normalized (right stochastic):
\begin{equation} \label{eq:appr4}
    \mathrm{SAM} (X; d, W_\mathcal{Q}, W_\mathcal{K}) \approx A.
\end{equation}

Next, we formally construct such $X, W_\mathcal{Q}, W_\mathcal{K}$ and derive tight error bounds for the approximation (\ref{eq:appr4}) in terms of matrix element ratios (\ref{eq:th1r1},\ref{eq:th1r2}).

\begin{figure}
    \centering
    \includegraphics[width=0.7\textwidth]{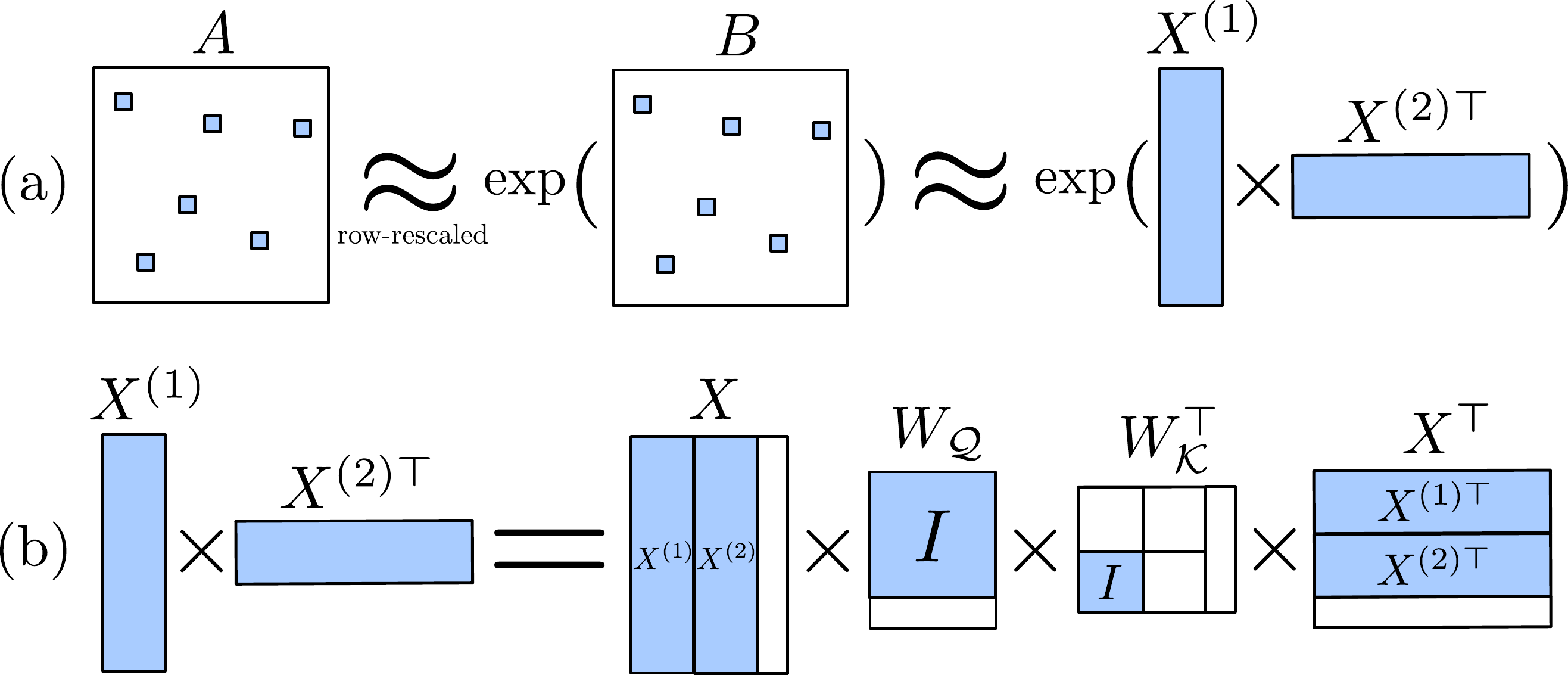}
    \caption{Illustration of the approximation scheme. \textbf{(a)} $\exp (B)$ is a row-rescaled approximation of $A$ (Equation \ref{eq:cdef}), whereas $X^{(1)} X^{(2) \top}$ is an unbiased approximation to $B$ (\ref{eq:unbiased}). \textbf{(b)} Representation of $X^{(1)} X^{(2) \top}$ as $X W_\mathcal{Q} W_\mathcal{K} X^\top$ according to (\ref{eq:xwdef}).}
    \label{fig:illustr}
\end{figure}




\subsection{Proof of Theorem \ref{th:1}: construction of $X, W_\mathcal{Q}, W_\mathcal{K}$ through random projections}

Consider a singular value decomposition (SVD) \cite{nla} of the matrix $B$: $B = U \Sigma V^\top$, where $U, V \in \mathbb{R}^{L \times L}$ are orthogonal matrices and $\Sigma = \mathrm{diag} (\sigma_1, \dots, \sigma_L)$, $\sigma_1 \geq \dots \sigma_L \geq 0$ are singular values of $B$. Define $D = U \Sigma$, then $B$ can be decomposed as $B = D V^\top$.

We will use random projections to compress $D$ and $V$ into matrices of shape $L \times d/2$. Namely, let $Y \in \mathbb{R}^{L \times d/2}$ be a random matrix sampled from a uniform distribution (Haar measure) on a set of Stiefel matrices\footnote{While $Y$ can be defined as a matrix with i.i.d. sub-Gaussian entries \cite{dotpr}, in general, orthogonal projections outperform unstructured ones in theory and practice \cite{ortho-features,performers,lin}. We also manage to obtain better dot product concentration results for Stiefel projections compared to unstructured ones (See discussion in Appendix \ref{app:lemma1proof}).} $\{ \Omega \in \mathbb{R}^{L \times d/2} | \Omega^\top \Omega = I_{d / 2} \}$. Here, $I_{d / 2}$ is a $(d/2) \times (d/2)$ identity matrix. Then we set $X^{(1)} = (2 L / d)^{1/2} D Y \in \mathbb{R}^{L \times d / 2}$, $X^{(2)} = (2 L / d)^{1/2} V Y \in \mathbb{R}^{L \times d / 2}$. $X^{(1)}, X^{(2)}$ can be considered compressions of $D, V$ since $X^{(1)} X^{(2) \top}$ is an unbiased approximation of $B = D V^\top$:
\begin{equation} \label{eq:unbiased}
    \mathbb{E} X^{(1)} X^{(2) \top} = D \times \mathbb{E} \left[ (2 L / d) \cdot Y Y^\top \right] \times V^\top = D \times \mathbb{E} \left[ L \cdot Y_{:,1} Y_{:, 1}^\top \right] \times V^\top = D V^\top = B,
\end{equation}
where we use the fact that columns of $Y$ are marginally uniformly distributed on $\mathcal{S}^{L - 1}$. See Figure \ref{fig:illustr}a for an illustration. We set $X, W_\mathcal{Q}, W_\mathcal{K}$ as
\begin{gather} \label{eq:xwdef}
    X = \begin{bmatrix} X^{(1)} \! & \! X^{(2)} \! & \! \mathbf{0}_{L \times (d_{hid} - d)} \end{bmatrix}, \, W_\mathcal{Q} \! = \! \begin{bmatrix} I_d \! & \!\! \mathbf{0}_{d \times (d_{hid} - d)} \end{bmatrix}^\top, \, W_\mathcal{K} \! = \! \begin{bmatrix} \Omega_d \! & \! \mathbf{0}_{d \times (d_{hid} - d)} \end{bmatrix}^\top, \\
    \Omega_d = \begin{bmatrix} \mathbf{0}_{d/2 \times d/2} & I_{d / 2} \\ \mathbf{0}_{d/2 \times d/2} & \mathbf{0}_{d/2 \times d/2} \end{bmatrix}, \nonumber
\end{gather}
where $\mathbf{0}_{\dots \times \dots}$ denotes a zero matrix of the corresponding shape. It is easy to see that in this case $X W_\mathcal{Q} W_\mathcal{K}^\top X^\top = X^{(1)} X^{(2) \top}$ (see Figure \ref{fig:illustr}b).

Our next step is to prove that with a nonzero probability, differences of elements in $X W_\mathcal{Q} W_\mathcal{K}^\top X^\top$ concentrate near the same differences in $B$:
\begin{lemma} \label{lemma:1}
With probability greater than $(L + 2)^{-1}$ it holds that
\begin{equation} \label{eq:evb}
    \forall 1 \leq i, j_1, j_2 \leq L, j_1 \neq j_2: | (X W_\mathcal{Q} W_\mathcal{K}^\top X^\top)_{i,j_1} - (X W_\mathcal{Q} W_\mathcal{K}^\top X^\top)_{i,j_2} - B_{i,j_1} + B_{i,j_2} | < \epsilon_2.
\end{equation}
\end{lemma}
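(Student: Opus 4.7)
The plan is to recast (\ref{eq:evb}) as a Johnson--Lindenstrauss-style dot-product concentration statement. By construction (\ref{eq:xwdef}), $X W_\mathcal{Q} W_\mathcal{K}^\top X^\top = X^{(1)} X^{(2)\top}$, so letting $D_i$ and $V_j$ denote the $i$-th row of $D$ and the $j$-th row of $V$, and setting $u = D_i^\top$, $v = (V_{j_1} - V_{j_2})^\top$, one has
\[
(X W_\mathcal{Q} W_\mathcal{K}^\top X^\top)_{i,j_1} - (X W_\mathcal{Q} W_\mathcal{K}^\top X^\top)_{i,j_2} = \tfrac{2L}{d}\,\langle Y^\top u,\, Y^\top v\rangle,
\]
while $B = D V^\top$ gives $B_{i,j_1} - B_{i,j_2} = \langle u, v\rangle$. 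Hence (\ref{eq:evb}) is exactly the event that the Stiefel projection $Y$ preserves the inner product $\langle u, v\rangle$ up to additive error $\epsilon_2$, simultaneously for all triples $(i, j_1, j_2)$ with $j_1 \neq j_2$.

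Next I would bound $\|u\|$ and $\|v\|$. Since the rows of the orthogonal matrix $V$ are orthonormal, $\|v\|^2 = 2$. Using $V^\top V = I_L$ one checks that $\|D_i\| = \|B_i\|$. By (\ref{eq:bdef}) and the $k$-nonzero-bounded assumption, $B_i$ has at most $k$ nonzero entries; by the $\gamma$-variation-bounded assumption, each such entry has magnitude at most $K := \max(\log \gamma - \log \epsilon_1 + \epsilon_2, 1)$. Therefore $\|u\|^2 \leq k K^2$ and $\|u\|\,\|v\| \leq K\sqrt{2k}$.

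The final ingredient is a sharp concentration estimate for Stiefel random projections of the form
\[
P\!\left(\,\left|\tfrac{2L}{d}\langle Y^\top u, Y^\top v\rangle - \langle u, v\rangle\right| \geq \epsilon_2\,\right) \;\leq\; 2\exp\!\left(-\,c\, d\, \epsilon_2^2 / (\|u\|\,\|v\|)^2\right)
\]
for an absolute constant $c$; this is the bound advertised in the footnote and deferred to Appendix~\ref{app:lemma1proof}. Plugging in the norm estimate and union-bounding over the at most $L^2(L-1)$ triples $(i,j_1,j_2)$ with $j_1 \neq j_2$ yields total failure probability at most $2L^2(L-1)\exp\!\left(-\,c\, d\, \epsilon_2^2 / (2 k K^2)\right)$; the hypothesis (\ref{eq:dhidb}) (with the matching absolute constant) drives this below $1 - (L+2)^{-1}$, which is the success bound claimed.

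The main obstacle is the Stiefel concentration estimate itself. Unlike the i.i.d.\ Gaussian case, where $\|Y^\top w\|^2$ is a scaled $\chi^2$ variable with classical exponential tails, the Haar measure on the Stiefel manifold is not a product measure, so concentration must be extracted via isoperimetry on that manifold or a moment-comparison argument, and the absolute constants have to be tracked carefully enough to land on the coefficient $32$ in (\ref{eq:dhidb}). Once that estimate is available, the reduction to dot products, the norm bookkeeping, and the union bound are routine.
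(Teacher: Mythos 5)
Your proposal is correct and follows essentially the same route as the paper: reduce (\ref{eq:evb}) to inner-product preservation under the Stiefel projection $Y$, bound $\|D_i\|_2$ and $\|V_{j_1}-V_{j_2}\|_2$, invoke the orthogonal-JLT concentration bound (Theorem \ref{ortho-theorem}, proved in the appendix via a moment/MGF comparison with the i.i.d.\ Gaussian ensemble -- one of the two paths you anticipate), and union-bound over the $L^2(L-1)$ triples. The one place you diverge is the norm estimate: the paper bounds $\|D_i\|_2$ by the top singular value $\sigma_1$ of $B$ via $\rho(BB^\top)\leq\|B\|_1\|B^\top\|_1$ (Equations \ref{eq:rub}--\ref{eq:sub}), which uses both the row and column sparsity of $B$ and yields $\|D_i\|_2\|V_{j_1}-V_{j_2}\|_2\leq\sqrt{2}\,kK$; you instead observe $\|D_i\|_2=\|B_i\|_2\leq\sqrt{k}\,K$ directly from orthogonality of $V$, which needs only row sparsity and gives the tighter bound $\sqrt{2k}\,K$, so the hypothesis (\ref{eq:dhidb}) suffices with room to spare (your route would in fact permit $k$ in place of $k^2$ there). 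Both arguments are valid; yours is slightly more elementary and sharper at this step.
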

The proof (Appendix \ref{app:lemma1proof}) uses a corollary of the seminal  Johnson-Lindenstrauss lemma \cite{jlt} about inner product preservation under random projections \cite{dotpr}. Two crucial observations are that
\begin{itemize}
    \item $W_\mathcal{Q}$ and $W_\mathcal{K}$ do not depend on $A$ by construction (\ref{eq:xwdef});
    \item according to Lemma \ref{lemma:1}, $X$ satisfying (\ref{eq:evb}) can be found with any probability by redrawing $Y$ $O(L)$ times.
\end{itemize}

Suppose that (\ref{eq:xwdef}) holds. Then for any $1 \leq i, j_1, j_2 \leq L$, $j_1 \neq j_2$:
\begin{equation} \label{eq:xwbd}
    B_{i,j_1} - B_{i,j_2} - \epsilon_2 < (X W_\mathcal{Q} W_\mathcal{K}^\top X^\top)_{i,j_1} - (X W_\mathcal{Q} W_\mathcal{K}^\top X^\top)_{i,j_2} < B_{i,j_1} - B_{i,j_2} + \epsilon_2.
\end{equation}
By definition of $B$ (\ref{eq:bdef}), whenever $A_{i,j_1} = 0, A_{i, j_2} \neq 0$, the right hand side inequality in (\ref{eq:xwbd}) is rewritten as
\begin{gather}
    (X W_\mathcal{Q} W_\mathcal{K}^\top X^\top)_{i,j_1} - (X W_\mathcal{Q} W_\mathcal{K}^\top X^\top)_{i,j_2} < - \log A_{i,j_2} + \log A^{\min nz}_i + \log \epsilon_1 \leq \log \epsilon_1. \label{eq:zbnd}
\end{gather}
Here we also used $A_{i,j_2} \geq A^{\min nz}_i$. (\ref{eq:zbnd}) is equivalent to (\ref{eq:th1r1}) after exponentiating, since exponents of $(X W_\mathcal{Q} W_\mathcal{K}^\top X^\top)_{i,j_1}$ and $(X W_\mathcal{Q} W_\mathcal{K}^\top X^\top)_{i,j_2}$ are $M_{i,j_1}$ and $M_{i,j_2}$ rescaled by the same factor.

Similarly to (\ref{eq:zbnd}), whenever $A_{i,j_1}, A_{i, j_2} \neq 0$, by expanding $B$'s definition, (\ref{eq:xwbd}) is rewritten as
\begin{equation} \label{eq:diffb}
    \! \log A_{i,j_1} \! - \log A_{i,j_2} \! - \epsilon_2 \! < \! (X W_\mathcal{Q} W_\mathcal{K}^\top X^\top)_{i,j_1} \! - (X W_\mathcal{Q} W_\mathcal{K}^\top X^\top)_{i,j_2} \! < \log A_{i,j_1} \! - \! \log A_{i,j_2} + \epsilon_2,
\end{equation}
which is equivalent to (\ref{eq:th1r2}) after exponentiating. The proof of Theorem \ref{th:1} is concluded. \qedsymbol

\subsection{The case of causal self-attention}

Another very popular type of self-attention mechanism is \textit{causal self-attention}, when each position $i$ only attends to elements $j \leq i$. This modification is required for autoregressive language modelling \cite{gpt-2,gpt-3} when each token is modelled as depending only on previous tokens in the sequence. We define the causal self-attenion matrix $\mathrm{CSAM}$ and causal self-attention $\mathrm{CSA}$ 
as
\begin{gather*}
    \mathrm{CSAM} (X; d, W_\mathcal{Q}, W_\mathcal{K}) = \mathrm{diag} (\mathcal{M}' \mathbf{1}_L)^{-1} \mathcal{M}', \quad \mathcal{M}' = \mathrm{tril}(\mathrm{USAM} (X; d, W_\mathcal{Q}, W_\mathcal{K})), \\
    \mathrm{CSA} (X; d, W_\mathcal{Q}, W_\mathcal{K}, W_\mathcal{V}) = \mathrm{CSAM} (X; d, W_\mathcal{Q}, W_\mathcal{K}) X W_\mathcal{V},    
\end{gather*}
where $\mathrm{tril} (\cdot)$ is the \textit{lower triangular part} of the argument matrix, meaning that it zeroes out all elements strictly above the main diagonal.

A natural question is whether the analog of Theorem \ref{th:1} holds for causal self-attention matrices. Since these matrices are lower triangular, we should only attempt to approximate lower-triangular right-stochastic matrices $A$. In fact, we obtain the following result. 
\begin{corollary} \label{cor:1}
    Change Theorem \ref{th:1} as follows: 1) require $A$ to be a lower triangular matrix (along with other requirements on $A$), 2) replace $\mathrm{SAM} \to \mathrm{CSAM}$ and 3) consider indices $j_1, j_2$ to be upper bounded by $i$ instead of $L$: $1 \leq j_1, j_2 \leq i$. The obtained new statement is true.
\end{corollary}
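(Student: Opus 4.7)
The plan is to observe that the proof of Theorem \ref{th:1} transfers to the causal setting essentially without modification, because the guarantees required of $\mathrm{CSAM}$ concern only ratios $M_{i,j_1}/M_{i,j_2}$ with $j_1, j_2 \le i$, and within any fixed row these ratios are invariant both to the row-normalization prefactor $\mathrm{diag}(\mathcal{M}' \mathbf{1}_L)^{-1}$ and to the $\mathrm{tril}$ truncation (which only zeroes entries with column index strictly greater than $i$).

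First, I would construct the matrix $B$ from the lower triangular, right stochastic, $k$-nonzero-bounded, $\gamma$-variation-bounded matrix $A$ using the same formula (\ref{eq:bdef}); since $A$ is right stochastic, each row has at least one nonzero element (necessarily at some column $\le i$), so $A^{\min nz}$ is well defined. I would then build $X, W_\mathcal{Q}, W_\mathcal{K}$ from the SVD of $B$ together with a uniform Stiefel projection $Y$ exactly as in (\ref{eq:xwdef}), and invoke Lemma \ref{lemma:1} to obtain, with probability greater than $(L+2)^{-1}$, the simultaneous bound on $|(X W_\mathcal{Q} W_\mathcal{K}^\top X^\top)_{i,j_1} - (X W_\mathcal{Q} W_\mathcal{K}^\top X^\top)_{i,j_2} - B_{i,j_1} + B_{i,j_2}|$ for every triple $i, j_1 \ne j_2$, in particular for every pair with $j_1, j_2 \le i$.

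The key algebraic step is that for $j_1, j_2 \le i$ the entries $\mathcal{M}'_{i,j_1}$ and $\mathcal{M}'_{i,j_2}$ coincide with the corresponding entries of $\mathrm{USAM}(X; d, W_\mathcal{Q}, W_\mathcal{K})$ (the $\mathrm{tril}$ operation does not touch entries with column index $\le i$ in row $i$), while the scalar normalization $(\mathcal{M}' \mathbf{1}_L)_i^{-1}$ cancels when forming the ratio $M_{i,j_1}/M_{i,j_2}$. Consequently,
\[
    \log \frac{M_{i,j_1}}{M_{i,j_2}} = (X W_\mathcal{Q} W_\mathcal{K}^\top X^\top)_{i,j_1} - (X W_\mathcal{Q} W_\mathcal{K}^\top X^\top)_{i,j_2},
\]
and the derivations (\ref{eq:zbnd}) and (\ref{eq:diffb}), restricted to indices $j_1, j_2 \le i$, yield exactly the causal analogues of (\ref{eq:th1r1}) and (\ref{eq:th1r2}).

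I do not anticipate a genuine obstacle: the construction of $W_\mathcal{Q}, W_\mathcal{K}$ remains $A$-independent, Lemma \ref{lemma:1} is a simultaneous statement over all index triples (so restricting to $j_1, j_2 \le i$ loses nothing), and the dimension bound (\ref{eq:dhidb}) is unchanged. The only small point worth checking is the well-definedness of $A^{\min nz}$ and $B$ when $A$ is lower triangular, which follows immediately from right stochasticity. The entire argument thus collapses into a short specialization of Theorem \ref{th:1}, with the $\mathrm{tril}$ masking playing no role beyond matching the lower-triangular support of $A$.
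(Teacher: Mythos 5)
Your proposal is correct and follows essentially the same route as the paper: reuse the construction and Lemma \ref{lemma:1} verbatim, restrict the difference bounds (\ref{eq:zbnd}), (\ref{eq:diffb}) to $j_1, j_2 \leq i$, and note that the $\mathrm{tril}$ masking and the (now different) row normalization do not affect within-row ratios for columns $\leq i$. Your explicit checks of the normalization cancellation and the well-definedness of $A^{\min nz}$ are just more detailed versions of what the paper's one-line proof leaves implicit.
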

\begin{proof}
The proof is unchanged compared to the proof of Theorem \ref{th:1} with the only change that $j_1, j_2$ are considered in the range $1 \leq j_1, j_2 \leq i$ when computing difference bounds (\ref{eq:zbnd},\ref{eq:diffb}). Cases when column indices $j_1$ or $j_2$ are bigger than the row index $i$ are redundant, since both $A$ and $M = \mathrm{CSAM} (X; d, W_\mathcal{Q}, W_\mathcal{K})$ have zero entries above the main diagonal.
\end{proof}
We conclude that the same logarithmic dependence $d = O(\log L)$ holds for the causal self-attention.

\begin{figure}
    \centering
    \includegraphics[width=0.95\textwidth]{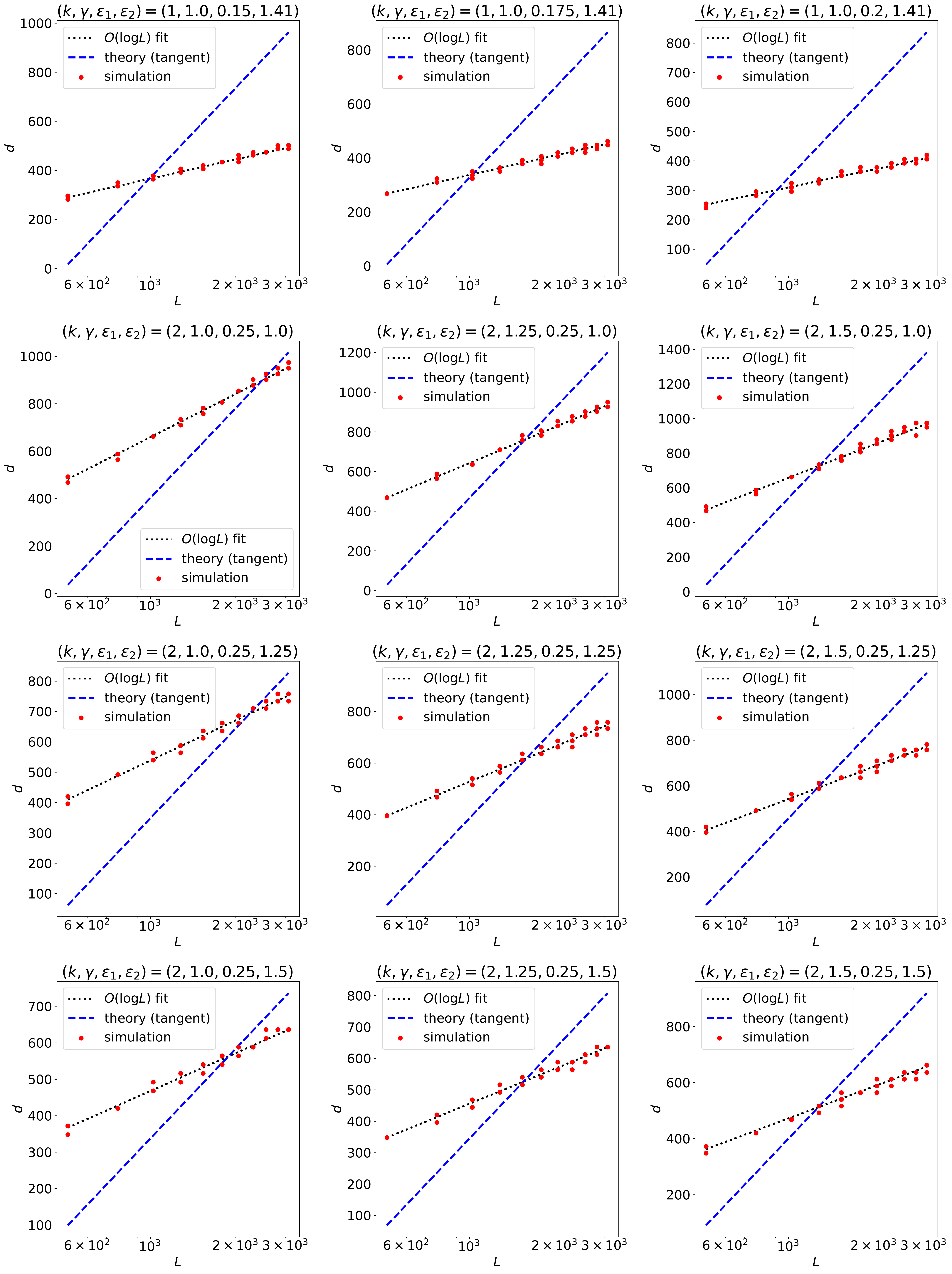}
    \caption{Finding empirical dependency of $d_{\min} (\epsilon_1, \epsilon_2)$ on $L$ given the fixed set of parameters $(k, \gamma, \epsilon_1, \epsilon_2)$. Each plot corresponds to one out of twelve sets of tested parameters. Red circles correspond to simulation results: we redraw  matrix $A$ 5 times for each $L$, resulting in 5 red circles per $L$. The horizontal $L$ axis is in a logarithmic scale, so that $O(\log L)$ corresponds to a straight line. The black dotted line corresponds to a $O (\log L)$ fit for the dots (linear when x axis scale is logarithmic). The blue dashed line indicates the tangent of the theoretical upper bound on $d_{\min} (\epsilon_1, \epsilon_2)$ (right hand side in Equation \ref{eq:dhidb}). We experiment with $k$ from $\{ 1, 2 \}$, where for $k = 1$ we set $\gamma = 1$, $\epsilon_2 = 1.41$, since for each row of $A$ there is a single nonzero value and (\ref{eq:th1r2}) is true for any $0 < \epsilon_2 < \sqrt{2}$.}
    \label{fig:plots}
\end{figure}

\section{Experiments} \label{sec:exp}

Theorem \ref{th:1} 
suggests an upper bound (r.h.s. in Equation \ref{eq:dhidb}) for the $d_{\min} (\epsilon_1, \epsilon_2)$ -- i.e. the minimal $d$ which results in $M$ satisfying (\ref{eq:th1r1},\ref{eq:th1r2}) for fixed $\epsilon_1, \epsilon_2$. A question which we address in the experimental section is, therefore, "\textbf{What is the actual $\boldsymbol{d_{\min} (\epsilon_1, \epsilon_2)}$ in practice? Does it satisfy the logarithmic law $\boldsymbol{d = O(\log L)}$}?"

To answer this question, we perform the following simulation. We select a range of $(k, \gamma, \epsilon_1, \epsilon_2)$ parameters. For each set of parameters, we iterate over $L$ on a uniform grid from 512 to 3072 with a step size 256. For each $L$ we sample the matrix $A$ and iterate over a uniform grid of $d$ values in ascending order until we find such $d$ which results in $M$ satisfying (\ref{eq:th1r1},\ref{eq:th1r2}). We sample $A$ by doing two passes over elements of the matrix. The first pass is over randomly permuted row indices and then randomly permuted column indices in a nested loop. The second one is over randomly permuted column indices and randomly permuted row indices in a nested loop. At each position, we add a new nonzero value if that does not violate $k$-nonzero-bounded condition. The nonzero value is set to either $1$ or $\gamma$ by a coin flip.

To check whether for the current $d$ there is $M$ satisfying (\ref{eq:th1r1},\ref{eq:th1r2}), we construct $Y$, $X^{(1)}$, $X^{(2)}$ and $M$ using the algorithm implied by the proof of Theorem \ref{th:1}. To sample Stiefel matrices $Y$, we use the algorithm based on QR decompositions of random Gaussian matrices from \cite{rorth}. We redraw the $Y$ matrix $Q L$ times, $Q = 1$, in the spirit of Lemma \ref{lemma:1} suggesting that $O(L)$ redraws should be enough to find the right $Y, X^{(1)}, X^{(2)}, X$ with a constant probability (when $d$ is big enough).

Figure \ref{fig:plots} illustrates the results. A remarkable observation is that, although empirical $d_{\min} (\epsilon_1, \epsilon_2)$ (red circles) grows slower than the theoretical upper bound (blue dashed line), it nicely fits the logarithmic curve $d = O(\log L)$ (black dotted line) in all twelve evaluated setups. The fact that the true $d_{\min} (\epsilon_1, \epsilon_2)$ grows slower than (\ref{eq:dhidb}) is natural, since (\ref{eq:dhidb}) is an upper bound on it. Though, as experiments reveal, both curves differ only by a constant multiplicative factor.

We run an additional experiment to reveal how $d_{\min} (\epsilon_1, \epsilon_2)$ depends on the number of samples $Q L$ used to find $M$ satisfying (\ref{eq:th1r1},\ref{eq:th1r2}). We take 2 out of 12 setups from Figure \ref{fig:plots} and try a range of $Q$ values from $0.1$ to $5.0$. Results are illustrated on Figure \ref{fig:sf_plots}. We observe that $d_{\min} (\epsilon_1, \epsilon_2)$ does not depend a lot on the choice of $Q$ and is roughly unchanged. Therefore, we conclude that our findings regarding the behaviour of empirical $d_{\min} (\epsilon_1, \epsilon_2)$ do not depend on $Q$ much and $Q = 1$ is a reasonable choice.

Finally, we visually assess the attention map produced by our Algorithm (Figure \ref{fig:amap}). We make sure that the generated self-attention matrix $M$ has the same (approximate) sparsity pattern as $A$. Additional experimental details and results (grids used to find $d$ in Figure \ref{fig:plots}, more attention maps like Figure \ref{fig:amap}) can be found in Appendix \ref{app:expdet}.

\begin{figure}[ht!]
    \centering
    \includegraphics[width=0.9\textwidth]{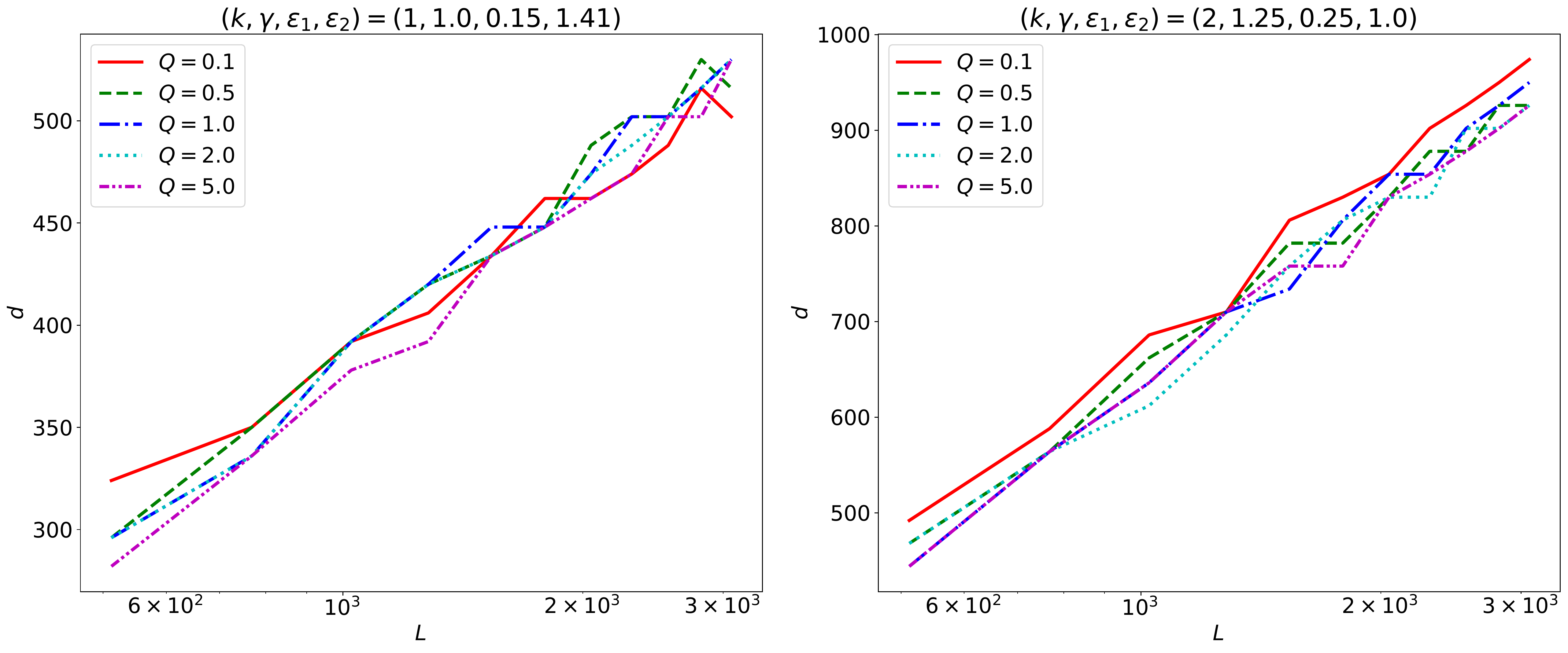}
    \caption{Empirical dependence of $d_{\min} (\epsilon_1, \epsilon_2)$ on $Q$ -- the factor defining the number of samples $QL$ (rounded to an integer) used to find the right matrix $M$. We use 2 out of 12 parameter sets from Figure \ref{fig:plots} (see plot titles). For each $Q$ we repeat the procedure to generate empirical values of $d_{\min} (\epsilon_1, \epsilon_2)$ (red circles from Figure \ref{fig:plots}) and connect them into a line for the better visualization.}
    \label{fig:sf_plots}
\end{figure}

\begin{figure}[ht!]
    \centering
    \includegraphics[width=0.8\textwidth]{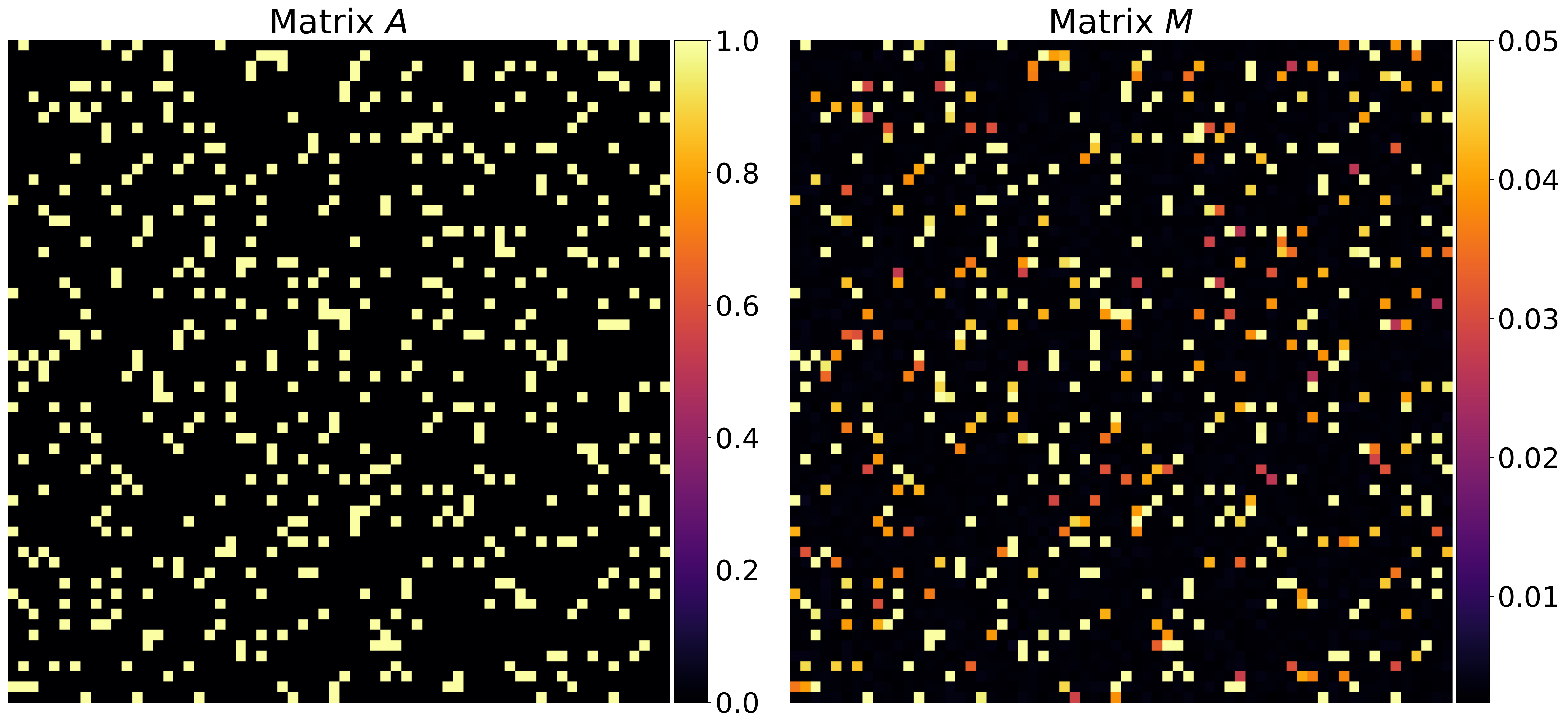}
    \caption{Attention maps $M$ generated by our algorithm from the proof of Theorem \ref{th:1}. We use $(k, \gamma, \epsilon_1, \epsilon_2) = (1, 1.0, 0.15, 1.41)$, $L = 512$, $d = 300$ (i.e. all nonzero elements are exactly $1$) and redraw $M$ until it satisfies (\ref{eq:th1r1},\ref{eq:th1r2}). We generate $A$ by the same algorithm used for Figure \ref{fig:plots}.  For the better visualization, we downscale $512 \times 512$ matrices into $64 \times 64$ images by taking the maximum over each square of size $8 \times 8$. Also, we clip entries of $M$ by $0.05$. Sparsity patterns visually coincide for both $A$ and $M$. More images can be found in Appendix \ref{app:expdet}.}
    \label{fig:amap}
\end{figure}





\section{Related work}

\textbf{Expressive power of Transformers.} As Transformers gained popularity, more theoretical results have emerged to explain their expressive power. Transformers were shown to be universal approximators \cite{truapp}, Turing-complete \cite{turingtr} and able to recognize counter languages \cite{cntlng}. Furthermore, Transformer modifications such as BigBird \cite{bigbird}, Transformers with hard attention \cite{hardatt} and sparse Transformers \cite{sptr} were shown to be universal approximators. 
Note that \cite{truapp,turingtr} rely on multilayer constructions, whereas we consider a single self-attention module, and \cite{bigbird,hardatt,sptr} analyze nonconventional forms of self-attention. \cite{pureatt} analyze limitations of a pure self-attention Transformer, i.e. without feedforward blocks and skip connections. In \cite{convsa}, authors show that the multi-head self-attention can provably model any image convolution layer. Again, this is hard to compare directly to our work since the construction in \cite{convsa} relies on a multiple number of self-attention heads proportional to the input size. Perhaps the most relevant to our work is \cite{lrbottleneck} where the authors show that for \textit{large} $d$ ($d \geq L$) and fixed inputs there exist $W_\mathcal{Q}, W_\mathcal{K}$ which approximate any positive right stochastic matrix via self-attention. 
In contrast, we analyze expressive power when $d$ is very small ($d = O(\log L)$).

\textbf{Random projections and Johnson-Lindenstrauss lemma.} Our proof techniques rely on the seminal Johnson-Lindenstrauss tranformation (JLT) \cite{jlt} used for dimensionality reduction \cite{dimr}. A random projection approach similar to ours was used in \cite{spher} to lower-bound graph sphericity -- a characteristic which is NP-hard to compute in general. A related \textit{random features} technique, relying on random projections, was originally introduced to improve efficiency of kernel SVMs \cite{rfs}, but recently found application in speeding up long-sequence Transformers \cite{performers}. We use Stiefel matrices as random projections, which in general result in tighter approximations than unconstrained projections \cite{ortho-features,performers,lin}. Ensembles of orthogonal random projections were shown to provide much better concentration results for the estimators relying on them in various other contexts, in particular: kernel approximation \cite{recycling_randomness, ortho-features, spinners, binary_embeddings} (JLT can be considered a special instantiation with a dot-product kernel), estimation of the gradients of Gaussian smoothings with evolution strategy methods \cite{structured_es},
kernel ridge regression techniques \cite{initialization_matters}, sliced Wasserstein distance estimation \cite{wasserstein} and more.
\section{Limitations and negative societal impacts} \label{sec:lim}
One limitation of this work, which also holds for many other theoretical results on deep learning, 
is that strict assumptions need to be made in order to proceed with theoretical reasoning. For instance, the assumptions of $k$-nonzero-bounded and $\gamma$-variation-bounded approximated matrix $A$ may be restrictive in certain cases. We hope to explore and alleviate these assumptions in future work.

This work theoretically studies Transformer networks -- models used in various applications. These models can have the following negative societal impacts: large $\text{CO}_2$ emissions during training \cite{co2}, privacy and data leak vulnerabilities \cite{gptleak}, bias and fairness issues and malicious misuse \cite{gpt-3,stochpar}.

\section{Conclusion}
We have analyzed the expressiveness of the self-attention matrix as a mechanism to approximate sparse patterns.
In practice, self-attention patterns are typically sparse and dynamic, meaning that they depend on the input to the network, while weights of the self-attention module are fixed. We aim to quantify expressiveness of self-attention.
Using random projection techniques and the seminal Johnson-Lindenstrauss lemma, we prove that weights of self-attention can be constructed in such a way that any sparse matrix can be approximated with certain input to the self-attention module. Sparsity is characterized by a bounded small number of nonzero values at each row or column of the approximated matrix.
We show that, when error and other parameters are fixed, $d$ grows only logarithmically with the sequence length $L$ ($d = O( \log L )$). We hope our work will facilitate further in-depth theoretical analysis of self-attention and Transformers to 
understand better their remarkable performance across a variety of tasks.

\section{Acknowledgments}

Valerii Likhosherstov acknowledges support from the
Cambridge Trust. Adrian Weller acknowledges support from a Turing AI Fellowship under grant EP/V025379/1, The Alan Turing Institute under EPSRC grant EP/N510129/1 and TU/B/000074, and the Leverhulme Trust via CFI.


\bibliographystyle{plain}
\bibliography{references}


\newpage
\appendix

\section{New concentration bounds for orthogonal Johnson-Lindenstrauss transform and proof of Lemma \ref{lemma:1}}

First, we derive new concentration results for the Johnson-Lindenstrauss transform (JLT) with orthogonal (Stiefel) projections, which are used in the proof of Lemma \ref{lemma:1}. After that, we prove the lemma.

\subsection{Theory of the orthogonal JLT with deterministic-lengths projections}

Note that in the JLT-part of our algorithm instead of applying i.i.d. Gausian projections sampled from $\mathcal{N}(0,\frac{1}{\sqrt{m}} I_p)$, where $p = L$ and $m = d / 2$ is the number of projections (as in the standard JLT setting), we instead choose different projections to be exactly orthogonal and a fixed projection to be chosen from the renormalized Gaussian distribution
$\sqrt{\frac{p}{m}}\frac{g}{\|g\|_{2}}$, where $g \sim \mathcal{N}(0,I_{p})$. We call this version of the JLT, \textit{Orthogonal JLT with Deterministic-Length Projections}, or simply: OJLTD. We now show that these modifications:
\begin{itemize}
\item projections renormalization to \textbf{deterministic-length} vectors and
\item orthogonalization of the ensemble of projections,
\end{itemize}
that consitute OJLTD, lead to \textbf{strictly better} concentration results of the resulting estimator than standard JLT. To see this intuitively, notice that if $m=p$ then OJLTD maps a vector to its representation in the randomly rotated coordinate system, in particular it exactly preserves distances and dot-products, which cannot be said about standard JLT even if $m=p$ (of course in practice we are interested in the setting where $m \ll p$).
For $m \leq p$, denote by $g_{1}^{\mathrm{ort}}$,...,$g_{m}^{\mathrm{ort}}$ the orthogonal ensemble of vectors such that $g_{i}^{\mathrm{ort}} \sim \sqrt{p} \frac{g}{\|g\|_{2}}$ and $g \sim \mathcal{N}(0,I_{p})$ (thus we have: $g_{i}^{\mathrm{ort}} \perp g_{j}^{\mathrm{ort}}$ for $i \neq j$).

Our result is as follows:
\begin{theorem}[Dot Product under Orthogonal Random Projections]
\label{ortho-theorem}
Let $x,y \in \mathbb{R}^{p}$. Let $R^{\mathrm{ort}} \in \mathbb{R}^{m \times p}$ be a random projection matrix with rows: 
$(\omega_{1}^{\mathrm{ort}})^{\top}=\sigma g_{1}^{\mathrm{ort}},...,(\omega_{m}^{\mathrm{ort}})^{\top}=\sigma g_{m}^{\mathrm{ort}} \in \mathbb{R}^{p}$ for some $\sigma >0$. 
Take $0 < \epsilon < 1$.
Then the following holds:
\begin{equation}
\label{lemma-ineq-1and2}
\mathbb{P}[|(R^{\mathrm{ort}}x)^{\top}R^{\mathrm{ort}}y - m\sigma^{2}x^{\top}y| < \epsilon m\sigma^{2}\|x\|_{2}\|y\|_{2}] < \left(2-\frac{2}{p+2}\right)\exp\left(-\frac{m\epsilon^{2}}{8}\right),
\end{equation}
\end{theorem}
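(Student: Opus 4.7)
I target the standard concentration inequality: an upper bound on the probability of the deviation event $\{|(R^{\mathrm{ort}}x)^\top R^{\mathrm{ort}}y - m\sigma^2 x^\top y| \geq \epsilon m\sigma^2 \|x\|_2 \|y\|_2\}$ by the stated quantity $(2-2/(p+2))\exp(-m\epsilon^2/8)$. As literally worded, the theorem bounds the probability of the complementary \emph{close} event from above by this same quantity, which for an unbiased and concentrating estimator is inconsistent with the way the lemma must be used in the proof of Lemma \ref{lemma:1}: there, a union bound over $O(L^3)$ index triples has to leave strictly positive probability of simultaneous closeness, which forces the per-triple bound to be on the \emph{far} event. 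I therefore read the direction of the inequality inside the probability as a typographical slip for $\geq$ and plan the proof against the deviation event, from which Lemma \ref{lemma:1} follows via a straightforward union bound.

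By homogeneity I would first reduce to $\|x\|_2 = \|y\|_2 = 1$, absorbing $\sigma$ into the scaling so that effectively $\sigma = 1$. The main reduction is the polarization identity $4 (R^{\mathrm{ort}}x)^\top R^{\mathrm{ort}}y = \|R^{\mathrm{ort}}(x+y)\|_2^2 - \|R^{\mathrm{ort}}(x-y)\|_2^2$, which recasts the mixed dot-product deviation as the deviation of two quadratic forms $\|R^{\mathrm{ort}} z\|_2^2$ from their common mean $m\|z\|_2^2$; a two-sided union bound over these two quadratic forms produces the factor of $2$ in the prefactor $2 - 2/(p+2)$. For a fixed unit $u \in \mathbb{R}^p$, rotational invariance of the Haar measure on Stiefel matrices yields the key distributional identity $\|R^{\mathrm{ort}} u\|_2^2 / p \stackrel{d}{=} \sum_{i=1}^m v_i^2$ with $v$ uniform on $S^{p-1}$, and the right-hand side has exactly a $\mathrm{Beta}(m/2, (p-m)/2)$ law. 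This exact Beta identity has no counterpart for standard i.i.d.\ Gaussian JLT and is the structural source of OJLTD's tighter bound.

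The next step applies the Chernoff method to this Beta: compute its moment generating function in closed form (a confluent-hypergeometric expression, equivalently an Euler integral over $[0,1]$), optimize over the tilting parameter, and bound the upper and lower tails $\Pr[pB - m \geq m\epsilon]$ and $\Pr[pB - m \leq -m\epsilon]$ by $(1-1/(p+2))\exp(-m\epsilon^2/8)$ each, and combine them. The hard part will be matching the precise constant $2/(p+2)$ in the improved prefactor: a generic sub-Gaussian argument on the bounded variable $B \in [0,1]$ (Hoeffding) only gives the naive prefactor $2$, losing the quantitative advantage of exact orthogonality, so the sharpening must go through the exact Beta density or an equivalent direct integration against spherical measure (e.g., via zonal-harmonic expansion). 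A secondary bookkeeping concern is ensuring that the polarization step composes cleanly with the quadratic-form tail bounds: one must verify that the $\|x \pm y\|_2^2 \leq 4$ bounds, applied with the Beta tails, preserve the $1/8$ constant in the exponent $-m\epsilon^2/8$ rather than degrading it.
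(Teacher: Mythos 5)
Your reading of the statement is right: the inequality inside the probability must be on the deviation (far) event for the union bound in the proof of Lemma \ref{lemma:1} to leave positive probability of simultaneous closeness, so the $<$ inside $\mathbb{P}[\cdot]$ is a typo for $\geq$ (this is confirmed by how the bound is invoked in Equation \ref{eq:dotpr1}). Your skeleton also matches the paper's: the paper likewise reduces to tail bounds on the quadratic forms $\|R^{\mathrm{ort}}(x-y)\|_2^2$ and $\|R^{\mathrm{ort}}(x+y)\|_2^2$ and then ``mimics the analysis from the proof of Theorem 2.1 in \cite{dotpr}'' (i.e.\ polarization plus Chernoff). Your observation that $\|R^{\mathrm{ort}}u\|_2^2/(\sigma^2 p)$ is exactly $\mathrm{Beta}(m/2,(p-m)/2)$-distributed for unit $u$ is correct and is a cleaner structural starting point than what the paper uses.

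However, there is a genuine gap, and it sits exactly where the theorem has content beyond the classical JLT bound: the prefactor $2-\frac{2}{p+2}$. You explicitly defer this (``the sharpening must go through the exact Beta density or an equivalent direct integration against spherical measure''), but without it your argument only reproduces the standard $2\exp(-m\epsilon^2/8)$ of \cite{dotpr}. The paper extracts the $\frac{p}{p+2}$ gain not by a direct Chernoff computation on the Beta law but by a term-by-term comparison of the Taylor expansions of the moment generating functions of $\sum_i((\omega_i^{\mathrm{ort}})^\top z)^2$ and of its i.i.d.\ counterpart: each mixed moment $\mathbb{E}[(S_1^{\mathrm{ort}})^{j_1}\cdots(S_m^{\mathrm{ort}})^{j_m}]$ equals the i.i.d.\ moment times a ratio $\tau$ of $\chi$-distribution moments, $\tau(d_1,\dots,d_m)=\prod_i\mu_p(d_i)/\mu_p(\sum_i d_i)$, which is shown (Lemma \ref{tau-lemma}) to be at most $\frac{p}{p+2}$ whenever at least two exponents are active. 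It also splits the improvement asymmetrically between the two tails via $\rho(\epsilon)$, which depends on $\|x+y\|_2^2/\|x-y\|_2^2$, rather than giving $1-\frac{1}{p+2}$ to each tail symmetrically as you propose; you would need to verify your symmetric split is actually attainable. A secondary issue: your plan to optimize the confluent-hypergeometric MGF of the Beta in closed form and land exactly on $(1-\frac{1}{p+2})\exp(-m\epsilon^2/8)$ is asserted, not demonstrated, and note that no bound of the form $\mathbb{E}[e^{\theta p B}]\leq\frac{p}{p+2}\,\mathbb{E}[e^{\theta\chi_m^2}]$ can hold uniformly in $\theta$ (both sides equal $1$ at $\theta=0$), so the comparison must be done at the level of the centered series or of individual moments, as the paper does. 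Until that step is supplied, the proposal establishes only the classical bound, not the stated one.
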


To the best of our knowledge, this is the first result showing that orthogonal Gaussian ensembles improve exponentially small bounds for standard JLT using independent projections (previous results for the orthogonal Gaussian projections focused on improving mean squared error bounds \cite{ortho-features}). Interestingly, we show that this result can be straightforwardly derived by modifying the proof of Theorem 2 from \cite{performers} about softmax kernel estimation via random projections
and by leveraging the fact that the newly introduced in \cite{performers} \textit{regularized softmax kernel} is upper bounded by the regular softmax kernel. Both kernels can be thought of as moment generating functions corresponding to distributions involving deterministic-length and random-length projections respectively and the relationship between them induced relationships between $\Sigma$-parameters for the corresponding sub-Gaussian distributions. 

Even though the concentration presented in Theorem \ref{ortho-theorem} is certainly not tight (since it does not converge to perfect estimation for $m=p$), it is \textbf{strictly better} that the one for the standard JLT mechanism applying matrices $R$ with i.i.d rows of the form $\sigma g$ for $g \sim \sigma \mathcal{N}(0,I_{D})$, which is of the form (see: Theorem 2.1 from \cite{dotpr}): 
\begin{equation}
\label{lemma-ineq-1and2}
\mathbb{P}[|(Rx)^{\top}Ry - m\sigma^{2}x^{\top}y| < \epsilon m\sigma^{2}\|x\|_{2}\|y\|_{2}] < 2\exp(-\frac{m\epsilon^{2}}{8}).
\end{equation}
\begin{proof}
Denote: $\rho(\epsilon) = (1+\frac{1-\epsilon}{1+\epsilon}\frac{\|x+y\|_{2}^{2}}{\|x-y\|_{2}^{2}})^{-1}$.
We will prove the following two inequalities that, while combined, lead to our main result:
\begin{equation}
\label{lemma-ineq-1}
\mathbb{P}[(R^{\mathrm{ort}}x)^{\top}R^{\mathrm{ort}}y < m\sigma^{2}x^{\top}y-\epsilon m\sigma^{2}\|x\|_{2}\|y\|_{2}] < (1-\frac{2\rho(\epsilon)}{d+2})\exp(-\frac{m\epsilon^{2}}{8}),
\end{equation}
\begin{equation}
\label{lemma-ineq-2}
\mathbb{P}[(R^{\mathrm{ort}}x)^{\top}R^{\mathrm{ort}}y > m\sigma^{2}x^{\top}y+\epsilon m\sigma^{2}\|x\|_{2}\|y\|_{2}] < (1-\frac{2(1-\rho(\epsilon))}{d+2})\exp(-\frac{m\epsilon^{2}}{8}).
\end{equation}

Our proof, as mentioned above, heavily relies on the proof of Theorem 2 from \cite{performers}, yet we give all the details below for Reader's convenience and since it is not exactly the same (in particular applying also other techniques).
Define: $S_{i}^{\mathrm{ort}}=((\omega^{\mathrm{ort}}_{i})^{\top}z)^{2}$, where $z=x-y$. Similarly, define: $S_{i}^{\mathrm{iid}}=((\omega^{\mathrm{iid}}_{i})^{\top}z)^{2}$, where $\omega^{\mathrm{iid}}_{1},...,\omega^{\mathrm{iid}}_{m} \overset{\mathrm{iid}}{\sim} \mathcal{N}(0,\sigma I_{p})$ and thus
$(\omega^{\mathrm{iid}}_{1})^{\top},...,(\omega^{\mathrm{iid}}_{m})^{\top}$ are the rows of the regular JLT projection matrix $R=R^{\mathrm{iid}}$.
Now take some $\theta>0$.

We have: 
\begin{align}
\begin{split}
\mathbb{E}[e^{\theta (S_{1}^{\mathrm{ort}}+...+S_{m}^{\mathrm{ort}})}] = \mathbb{E}[\sum_{j=0}^{\infty} \frac{(\theta \sum_{i=1}^{m} S_{i}^{\mathrm{ort}})^{j}}{j!}] 
= \mathbb{E}[\sum_{j=0}^{\infty}\frac{\theta^{j}}{j!}(\sum_{i=1}^{m} S^{\mathrm{ort}}_{i})^{j}]=\\
\sum_{j=0}^{\infty}\frac{\theta^{j}}{j!} \mathbb{E}[(\sum_{i=1}^{m} S^{\mathrm{ort}}_{i})^{j}]=
\sum_{j=0}^{\infty}\frac{\theta^{j}}{j!}
\mathbb{E}[\sum_{(j_{1},...,j_{m}) \in \mathcal{S}_{j}} c(j_{1},...,j_{m}) (S_{1}^{\mathrm{ort}})^{j_{1}} \cdot ... \cdot (S_{m}^{\mathrm{ort}})^{j_{m}}],
\end{split}
\end{align}
where $\mathcal{S}_{j} = \{(j_{1},...,j_{m}) \in \mathbb{N} \times ...\times \mathbb{N}:j_{1},...,j_{m} \geq 0, j_{1}+...+j_{m}=j\}$ and 
for some positive constants $c(j_{1},...,j_{m})$.

Thus we have:
\begin{equation}
\mathbb{E}[e^{\theta (S_{1}^{\mathrm{ort}}+...+S_{m}^{\mathrm{ort}})}] = \sum_{j=0}^{\infty} \frac{\theta^{j}}{j!} \sum_{(j_{1},...,j_{m}) \in \mathcal{S}_{j}} c(j_{1},...,j_{m}) \mathbb{E}[(S_{1}^{\mathrm{ort}})^{j_{1}} \cdot ... \cdot (S_{m}^{\mathrm{ort}})^{j_{m}}].  
\end{equation}

Similarly, we get:
\begin{equation}
\mathbb{E}[e^{\theta (S_{1}^{\mathrm{iid}}+...+S_{m}^{\mathrm{iid}})}] = \sum_{j=0}^{\infty} \frac{\theta^{j}}{j!} \sum_{(j_{1},...,j_{m}) \in \mathcal{S}_{j}} c(j_{1},...,j_{m}) \mathbb{E}[(S_{1}^{\mathrm{iid}})^{j_{1}} \cdot ... \cdot (S_{m}^{\mathrm{iid}})^{j_{m}}].    
\end{equation}

Therefore we get:
\begin{align}
\begin{split}
\Delta = \mathbb{E}[e^{\theta (S_{1}^{\mathrm{iid}}+...+S_{m}^{\mathrm{iid}})}] - \mathbb{E}[e^{\theta (S_{1}^{\mathrm{ort}}+...+S_{m}^{\mathrm{ort}})}] \\
=
\sum_{j=0}^{\infty} \frac{\theta^{j}}{j!} \sum_{(j_{1},...,j_{m}) \in \mathcal{S}_{j}} c(j_{1},...,j_{m}) \left(\mathbb{E}[(S_{1}^{\mathrm{iid}})^{j_{1}} \cdot ... \cdot (S_{m}^{\mathrm{iid}})^{j_{m}}] - \mathbb{E}[(S_{1}^{\mathrm{ort}})^{j_{1}} \cdot ... \cdot (S_{m}^{\mathrm{ort}})^{j_{m}}]\right)
\end{split}
\end{align}
Thus we obtain:
\begin{align}
\begin{split}
\Delta = \sum_{j=0}^{\infty} \frac{\theta^{j}}{j!} \sum_{(j_{1},...,j_{m}) \in \mathcal{S}_{j}} c(j_{1},...,j_{m})  \widehat{\Delta}(j_{1},...,j_{m}),
\end{split}    
\end{align}
and $\widehat{\Delta}(j_{1},...,j_{m})$ is given as:
\begin{align}
\begin{split}
\label{imp-ineq}
\widehat{\Delta}(j_{1},...,j_{m}) = \mathbb{E}[((\omega_{1}^{\mathrm{iid}})^{\top}z)^{2j_{1}} \cdot ... \cdot ((\omega_{m}^{\mathrm{iid}})^{\top}z)^{2j_{m}}] - 
\mathbb{E}[((\omega_{1}^{\mathrm{ort}})^{\top}z)^{2j_{1}} \cdot ... \cdot ((\omega_{m}^{\mathrm{ort}})^{\top}z)^{2j_{m}}].
\end{split}
\end{align}
Our next goal is to re-write the formula for $\widehat{\Delta}(j_{1},...,j_{m})$. Denote:
\begin{equation}
\mathcal{Y}(d_{1},...,d_{m}) = ((\omega_{1}^{\mathrm{ort}})^{\top}z)^{d_{1}} \cdot ... \cdot ((\omega_{m}^{\mathrm{ort}})^{\top}z)^{d_{m}}    
\end{equation}
for $d_{1}=2j_{1},...,d_{m}=2j_{m}$.
Observe that $\mathcal{Y}(d_{1},...,d_{m})$ has the same distribution as $\mathcal{Y}^{\prime}(d_{1},...,d_{m})$ defined as:

\begin{equation}
\mathcal{Y}^{\prime}(d_{1},...,d_{m}) = (e_{1}^{\top}\frac{g}{\|g\|_{2}}\|z\|_{2})^{d_{1}} \cdot ... \cdot (e_{m}^{\top}\frac{g}{\|g\|_{2}}\|z\|_{2})^{d_{m}} \cdot
(\|\omega_{1}^{\mathrm{ort}}\|_{2})^{d_{1}} \cdot ... \cdot
(\|\omega_{m}^{\mathrm{ort}}\|_{2})^{d_{m}},
\end{equation}
where $g$ is a Gaussian vector taken from the $\mathcal{N}(0,I_{p})$ distribution, independently from: $\|\omega_{1}^{\mathrm{ort}}\|_{2},...,\|\omega_{m}^{\mathrm{ort}}\|_{2}$. 
This comes from the fact that for a fixed $z$ one can think about the set:
$\frac{\omega_{1}^{\mathrm{ort}}}{\|\omega_{1}^{\mathrm{ort}}\|_{2}},...,\frac{\omega_{m}^{\mathrm{ort}}}{\|\omega_{m}^{\mathrm{ort}}\|_{2}}$ as a random rotation of the system of $m$ canonical basis vectors: $e_{1},...,e_{m}$.
Thus instead of applying a random rotation to: $e_{1},...,e_{m}$, one can equivalently randomly rotate vector $z$. Randomly rotated vector $z$ has the same distribution as: $\frac{g}{\|g\|_{2}}\|z\|_{2}$. 

Now note that lengths of vectors $\omega_{1}^{\mathrm{ort}},...,\omega_{m}^{\mathrm{ort}}$ are chosen independently.

Therefore we obtain:
\begin{align}
\begin{split}
\mathbb{E}[((\omega_{1}^{\mathrm{ort}})^{\top}z)^{d_{1}} \cdot ... \cdot ((\omega_{m}^{\mathrm{ort}})^{\top}z)^{d_{m}}] = \\ \mathbb{E}[(\|\omega_{1}^{\mathrm{ort}}\|_{2})^{d_{1}}] \cdot ... \cdot \mathbb{E}[(\|\omega_{m}^{\mathrm{ort}}\|_{2})^{d_{m}}] \cdot \mathbb{E}[(e_{1}^{\top}v)^{d_{1}} \cdot ... \cdot (e_{m}^{\top}v)^{d_{m}}]
\|z\|_{2}^{d_{1}+...+d_{m}},
\end{split}
\end{align}
where $v \sim \frac{g}{\|g\|_{2}}$.

Denote $g=(g_{1},...,g_{p})^{\top}$.
Thus we obtain:
\begin{align}
\begin{split}
\label{lhs}
\mathbb{E}[((\omega_{1}^{\mathrm{ort}})^{\top}z)^{d_{1}} \cdot ... \cdot ((\omega_{m}^{\mathrm{ort}})^{\top}z)^{d_{m}}] = \\ \mathbb{E}[(\|\omega_{1}^{\mathrm{ort}}\|_{2})^{d_{1}}] \cdot ... \cdot \mathbb{E}[(\|\omega_{m}^{\mathrm{ort}}\|_{2})^{d_{m}}] \cdot 
\|z\|_{2}^{d_{1}+...+d_{m}} \mathbb{E}[\frac{g_{1}^{d_{1} \cdot ... \cdot}g_{m}^{d_{m}}}{\sqrt{g_{1}^{2}+...+g_{p}^{2}}^{d_{1}+...+d_{m}}}]
\end{split}
\end{align}

Now let us focus on the second expression from the formula on $\widehat{\Delta}(d_{1},...,d_{m})$. We have:
\begin{align}
\begin{split}
\label{rhs}
\mathbb{E}[((\omega_{1}^{\mathrm{iid}})^{\top}z)^{d_{1}} \cdot ... \cdot ((\omega_{m}^{\mathrm{iid}})^{\top}z)^{d_{m}}] = \prod_{i=1}^{m} \mathbb{E}[((\omega_{i}^{\mathrm{iid}})^{\top}z)^{d_{i}}]   
= \\ \mathbb{E}[(\|\omega_{1}^{\mathrm{iid}}\|_{2})^{d_{1}}] \cdot ... \cdot \mathbb{E}[(\|\omega_{m}^{\mathrm{iid}}\|_{2})^{d_{m}}] \cdot \|z\|_{2}^{d_{1}+...+d_{m}} \cdot \prod_{i=1}^{m} \mathbb{E}[\frac{g_{i}^{d_{i}}}{\sqrt{g_{1}^{2}+...+g_{p}^{2}}^{d_{i}}}],
\end{split}
\end{align}

where the first equality comes from the fact that
different $\omega_{i}^{\mathrm{iid}}$s are independent and the second one is implied by the analogous analysis to the one conducted above.

We will need the following lemma:

\begin{lemma}
\label{useful-lemma}
For every $s \in \mathbb{N}_{+}$ such that $s \leq n$ and every $k_{1},...,k_{s} \in \mathbb{N}_{+}$ the following holds:
\begin{equation}
\mathbb{E}[\frac{g_{1}^{k_{1}} \cdot ... \cdot g_{s}^{k_{s}}}{\sqrt{g_{1}^{2}+...+g_{p}^{2}}^{k_{1}+...+k_{s}}}] = \frac{\prod_{i=1}^{s}\mathbb{E}[g_{i}^{k_{i}}]}{\mathbb{E}[\sqrt{g_{1}^{2}+...+g_{p}^{2}}^{k_{1}+...+k_{s}}]}.    
\end{equation}
\end{lemma}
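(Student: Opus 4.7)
The plan is to exploit the well-known fact that for a standard Gaussian vector $g \sim \mathcal{N}(0, I_p)$, the radial part $\|g\|_2$ and the angular part $g/\|g\|_2$ are independent. This is a direct consequence of the rotational invariance of the isotropic Gaussian distribution: for any orthogonal matrix $Q$, $Qg$ has the same law as $g$, so $\|g\|_2$ and $g/\|g\|_2$ must be independent.

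With this in hand, first I would observe that the quantity
\begin{equation*}
\frac{g_1^{k_1} \cdots g_s^{k_s}}{\sqrt{g_1^2 + \dots + g_p^2}^{k_1+\dots+k_s}} = \left(\frac{g_1}{\|g\|_2}\right)^{k_1} \cdots \left(\frac{g_s}{\|g\|_2}\right)^{k_s}
\end{equation*}
is a function only of the direction $g/\|g\|_2$, hence it is independent of $\|g\|_2^{k_1+\dots+k_s}$. Multiplying and taking expectations:
\begin{equation*}
\mathbb{E}\bigl[g_1^{k_1}\cdots g_s^{k_s}\bigr] = \mathbb{E}\bigl[\|g\|_2^{k_1+\dots+k_s}\bigr] \cdot \mathbb{E}\left[\frac{g_1^{k_1}\cdots g_s^{k_s}}{\|g\|_2^{k_1+\dots+k_s}}\right].
\end{equation*}

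Next I would use that the coordinates $g_1, \dots, g_s$ are themselves independent standard normals, so $\mathbb{E}[g_1^{k_1}\cdots g_s^{k_s}] = \prod_{i=1}^s \mathbb{E}[g_i^{k_i}]$. Dividing by $\mathbb{E}[\|g\|_2^{k_1+\dots+k_s}]$ (which is strictly positive and finite for all nonnegative integer exponents, since $\|g\|_2^2$ follows a chi-squared distribution with $p$ degrees of freedom and all its moments exist) then yields the claimed identity.

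The only step requiring any care is justifying the independence of $\|g\|_2$ from the direction, and that the denominator moment is finite and nonzero; both are standard and cause no real obstacle. There is no hard computation: once the independence is invoked, the rest is bookkeeping. The lemma will subsequently be applied with $k_1 = d_1, \dots, k_s = d_s$ (restricted to nonzero exponents) in (\ref{lhs}) and coordinate-wise in (\ref{rhs}) to compare $\widehat{\Delta}(j_1,\dots,j_m)$ to a product form, which is where its role in the proof of Theorem \ref{ortho-theorem} emerges.
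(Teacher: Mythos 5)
Your proof is correct and rests on the same key fact as the paper's: the independence of the radial part $\|g\|_2$ and the angular part $g/\|g\|_2$ of an isotropic Gaussian. The paper packages this via an independent copy $\tilde{g}$ (setting $r = \frac{g}{\|g\|_2}\|\tilde{g}\|_2 \sim g$ and factoring moments of $r$), but that is just an equivalent formulation of the decoupling you invoke directly, so the two arguments are essentially identical.
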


\begin{proof}
Take $r = \frac{g}{\|g\|_{2}}\|\tilde{g}\|_{2}$, where $\tilde{g}$ is an independent copy of $g$. Note that $r \sim g$.
We have:
\begin{align}
\begin{split}
\mathbb{E}[r_{1}^{k_{1}}] \cdot ... \cdot     
\mathbb{E}[r_{s}^{k_{s}}] = 
\mathbb{E}[r_{1}^{k_{1}} \cdot ... \cdot r_{s}^{k_{s}}]
= \mathbb{E}[\frac{g_{1}^{k_{1}} \cdot ... \cdot g_{s}^{k_{s}}}{\sqrt{g_{1}^{2}+...+g_{p}^{2}}^{k_{1}+...+k_{s}}}]
\cdot \mathbb{E}[\|\tilde{g}\|_{2}^{k_{1}+...+k_{s}}],
\end{split}    
\end{align}
where the first equality comes from the independence of different elements of $r=(r_{1},...,r_{n})^{\top}$
and the second equality is implied by the fact that $\tilde{g}$ is independent from $g$.

Therefore we have:
\begin{equation}
 \mathbb{E}[\frac{g_{1}^{k_{1}} \cdot ... \cdot g_{s}^{k_{s}}}{\sqrt{g_{1}^{2}+...+g_{p}^{2}}^{k_{1}+...+k_{s}}}] = \frac{\mathbb{E}[r_{1}^{k_{1}}] \cdot ... \cdot     \mathbb{E}[r_{s}^{k_{s}}]}{\mathbb{E}[\|\tilde{g}\|_{2}^{k_{1}+...+k_{s}}]}.   
\end{equation}
That completes the proof since $z \sim g$ and $\tilde{g} \sim g$.
\end{proof}

Note that by Lemma \ref{useful-lemma}, we can rewrite the right expression from the formula on 
$\widehat{\Delta}(d_1,..., d_m)$
as: 
\begin{equation}
\mathbb{E}[(\|\omega_{1}^{\mathrm{ort}}\|_{2})^{d_{1}}] \cdot ... \cdot \mathbb{E}[(\|\omega_{m}^{\mathrm{ort}}\|_{2})^{d_{m}}] \cdot \\
\|z\|_{2}^{d_{1}+...+d_{m}}\frac{\prod_{i=1}^{m}\mathbb{E}[g_{i}^{d_{i}}]}{\mathbb{E}[\sqrt{g_{1}^{2}+...+g_{p}^{2}}^{d_{1}+...+d_{m}}]}.
\end{equation}
The left expression from the formula on 
$\widehat{\Delta}(d_1,..., d_m)$ can be rewritten as:
\begin{align}
\begin{split}
\mathcal{L}(d_{1},...,d_{m}) = \mathbb{E}[(\|\omega_{1}^{\mathrm{iid}}\|_{2})^{d_{1}}] \cdot ... \cdot \mathbb{E}[(\|\omega_{m}^{\mathrm{iid}}\|_{2})^{d_{m}}] \cdot 
\|z\|_{2}^{d_{1}+...+d_{m}} \\
\frac{\prod_{i=1}^{m}\mathbb{E}[g_{i}^{d_{i}}]}
{\mathbb{E}[\sqrt{g_{1}^{2}+...+g_{p}^{2}}^{d_{1}}] \cdot ...\cdot \mathbb{E}[\sqrt{g_{1}^{2}+...+g_{p}^{2}}^{d_{m}}]}.
\end{split}
\end{align}

Since marginal distributions of $\omega_{i}^{\mathrm{ort}}$ and $\omega_{i}^{\mathrm{iid}}$ are the same, we can rewrite $\widehat{\Delta}(d_{1},...,d_{n})$ as:
\begin{equation}
\widehat{\Delta}(d_{1},...,d_{m})=
\mathcal{L}(d_{1},...,d_{m})(1 - \tau(d_{1},...,d_{m})),
\end{equation}
where $\tau(d_{1},...,d_{m})$ is defined as:
\begin{equation}
\tau(d_{1},...,d_{m}) = \frac{\mathbb{E}[\sqrt{g_{1}^{2}+...+g_{p}^{2}}^{d_{1}}] \cdot ...\cdot \mathbb{E}[\sqrt{g_{1}^{2}+...+g_{p}^{2}}^{d_{m}}]}
{\mathbb{E}[\sqrt{g_{1}^{2}+...+g_{p}^{2}}^{d_{1}+...+d_{m}}]}     
\end{equation}

With our new notation, $\Delta$ can be rewritten as:
\begin{align}
\begin{split}
\label{equation-x}
\Delta = \sum_{j=0}^{\infty} \frac{\theta^{j}}{j!} \sum_{(j_{1},...,j_{m}) \in \mathcal{S}_{j}} c(j_{1},...,j_{m})  \mathcal{L}(2j_{1},...,2j_{m})(1-\tau(2j_{1},...,2j_{m})),
\end{split}    
\end{align}

Note also that we have:
\begin{align}
\begin{split}
\label{equation-y}
e^{\theta(S_{1}^{\mathrm{iid}}+...+S_{m}^{\mathrm{iid}})} = \sum_{j=0}^{\infty} \frac{\theta^{j}}{j!} \sum_{(j_{1},...,j_{m}) \in \mathcal{S}_{j}} c(j_{1},...,j_{m})  \mathcal{L}(2j_{1},...2j_{m}).
\end{split}    
\end{align}

We need the following useful lemma:
\begin{lemma}
\label{tau-lemma}
The following holds if for some $i \neq j$ we have: $d_{i}, d_{j} > 0$ and all $d_{i}$ are even:
\begin{equation}
\tau(d_{1},...,d_{m}) \leq \frac{p}{p+2}.    
\end{equation}
\end{lemma}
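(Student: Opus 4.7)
The plan is to compute $\tau$ in closed form using the fact that $\|g\|_2^2$ is $\chi^2_p$-distributed. For any even $d_i = 2k_i$, standard chi-squared moments give $\mathbb{E}[\|g\|_2^{2k_i}] = 2^{k_i}\Gamma(p/2+k_i)/\Gamma(p/2) = 2^{k_i}(p/2)_{k_i}$, where $(a)_k = a(a+1)\cdots(a+k-1)$ is the rising factorial. Setting $a = p/2$ and $K = \sum_i k_i$, the powers of $2$ cancel between numerator and denominator of $\tau$, producing the clean identity
$$\tau(d_1,\ldots,d_m) = \frac{\prod_{i=1}^m (a)_{k_i}}{(a)_K}.$$
Indices with $k_i = 0$ contribute $(a)_0 = 1$ and may be discarded, so it suffices to assume $k_i \geq 1$ for every remaining $i$ with $m \geq 2$. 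The target bound $p/(p+2)$ is then exactly $a/(a+1)$.

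The crucial subroutine is the pairwise inequality $\frac{(a)_{k_1}(a)_{k_2}}{(a)_{k_1+k_2}} \leq \frac{a}{a+1}$ whenever $k_1, k_2 \geq 1$. This follows from the factorization $(a)_{k_1+k_2} = (a)_{k_1}(a+k_1)_{k_2}$, which telescopes to
$$\frac{(a)_{k_1}(a)_{k_2}}{(a)_{k_1+k_2}} = \prod_{j=0}^{k_2-1}\frac{a+j}{a+k_1+j}.$$
Every factor is at most $1$, and the $j=0$ term equals $\frac{a}{a+k_1} \leq \frac{a}{a+1}$ since $k_1 \geq 1$. This settles the base case $m = 2$.

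For $m \geq 3$ I peel off one index and induct. Writing $K' = k_1 + \cdots + k_{m-1}$, decompose
$$\frac{\prod_{i=1}^m (a)_{k_i}}{(a)_K} = \frac{\prod_{i=1}^{m-1}(a)_{k_i}}{(a)_{K'}} \cdot \frac{(a)_{K'}(a)_{k_m}}{(a)_K}.$$
The second factor is bounded by $\frac{a}{a+1}$ by the pairwise case applied to the positive integers $(K', k_m)$. The first factor is at most $1$ because iterating the trivial inequality $(a)_{k+\ell} = (a)_k (a+k)_\ell \geq (a)_k (a)_\ell$ gives $(a)_{K'} \geq \prod_{i=1}^{m-1}(a)_{k_i}$. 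Multiplying yields $\tau \leq a/(a+1)$, which is $p/(p+2)$.

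The main obstacle — if any — is a bookkeeping one: one must ensure that the hypothesis "at least two $d_i > 0$" is exactly what enables the strict lower bound $k_1 \geq 1$ in the pairwise step. Without two positive entries, the product collapses to a single $(a)_K$, the $(a)_0$ factors trivialize, and the $\frac{a}{a+1}$ gain disappears. Once this reduction is in place, the rest is routine manipulation of rising factorials and the Chernoff step that uses Lemma \ref{tau-lemma} downstream in the proof of Theorem \ref{ortho-theorem} goes through unchanged.
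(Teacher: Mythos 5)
Your proof is correct and follows essentially the same route as the paper: both reduce $\tau$ to a ratio of chi-distribution moments expressed through Gamma functions, and both identify $d_i=d_j=2$ (equivalently, the pairwise case $k_1=k_2=1$) as the extremal configuration giving the value $\tfrac{p}{p+2}=\tfrac{a}{a+1}$. Your version is in fact more complete than the paper's, which asserts the maximization claim as ``easy to see''; your rising-factorial telescoping identity $(a)_{k_1+k_2}=(a)_{k_1}(a+k_1)_{k_2}$ together with the monotonicity bound $(a)_{K'}\geq\prod_i (a)_{k_i}$ supplies exactly the missing rigor, and correctly isolates where the hypothesis that at least two $d_i$ are positive is used.
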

\begin{proof}
Note that $\tau(d_{1},...,d_{m})$ can be rewritten as:
\begin{equation}
\label{multi-d}
\tau(d_{1},...,d_{m}) = \frac{\prod_{i=1}^{m} \mu_{p}(d_{i})}{\mu_{p}(\sum_{i=1}^{m} d_i)},    
\end{equation}
where $\mu_{p}(j)$ stands for the $j^{th}$ moment of the $\chi$-distribution with $p$ degrees of freedom.
Note that $\mu_{p}(j) = 2^{\frac{j}{2}}
\frac{\Gamma(\frac{p+j}{2})}{\Gamma(\frac{p}{2})}$,
where $\Gamma$ is the so-called \textit{Gamma-function}.

Using the fact that: $\Gamma(n) = (n-1)!$ and $\Gamma(n+\frac{1}{2})=\frac{(2n-1)!!}{2^{n}}\sqrt{\pi}$ for $n \in \mathbb{N}_{+}$, it is easy to see 
that for a fixed $p$, the RHS of the Equality \ref{multi-d} is maximized when $d_{i}=d_{j}=2$ and $d_{k}=0$ for some $i \neq j$ and $k \notin \{i,j\}$. Furthermore, straightforward calculations show that in that case the value of the RHS from Equality \ref{multi-d} is $\frac{p}{p+2}$. That completes the proof of the Lemma.
\end{proof}
By applying Eq. \ref{equation-x}, \ref{equation-y} and the above lemma, we conclude that for any $\lambda,\alpha>0$ the following is true:
\begin{equation}
\mathbb{E}[\exp(\frac{\lambda}{1-\alpha})\|R^{\mathrm{ort}}(x-y)\|_{2}^{2}] \leq \frac{p}{p+2}    
\mathbb{E}[\exp(\frac{\lambda}{1-\alpha})\|R(x-y)\|_{2}^{2}]
\end{equation}
Furthermore, from Corollary 1 in \cite{lin}, we get:
\begin{equation}
\mathbb{E}[\exp(-\frac{\lambda}{\alpha})\|R^{\mathrm{ort}}(x+y)\|_{2}^{2}] \leq     
\mathbb{E}[\exp(-\frac{\lambda}{\alpha})\|R(x+y)\|_{2}^{2}]
\end{equation}
Now observe that a fixed row of of $R^{\mathrm{ort}}$ is of the form:
\begin{equation}
(\sqrt{p}\frac{\sigma g_{1}}{\sqrt{g_{1}^{2}+...+g_{p}^{2}}},...,  
\sqrt{p}\frac{\sigma g_{p}}{\sqrt{g_{1}^{2}+...+g_{p}^{2}}}).
\end{equation}
From the fact that the regularized softmax kernel $\mathrm{SMREG}$ from \cite{performers} is upper-bounded by the softmax kernel $\mathrm{SM}$ (see: Theorem 1 from \cite{performers}), we get for $g \sim \mathcal{N}(0,1)$
and $g_{1},...,g_{p} \overset{\mathrm{iid}}{\mathrm{\sim}} \mathcal{N}(0,1)$:
\begin{equation}
\mathbb{E}[\mathrm{exp}(\sqrt{p}\frac{\sigma g_{1}}{\sqrt{g_{1}^{2}+...+g_{p}^{2}}})] \leq \mathbb{E}[\mathrm{exp}(\sigma g)].   
\end{equation}
Therefore entries of $R^{\mathrm{ort}}$ are sub-Gaussian with parameter $\sigma^{\prime} \leq \sigma$. Furthermore, from our previous analysis, we conclude that:
\begin{equation}
\mathbb{E}\left[\exp(\sum_{i=1}^{p}(\sqrt{p}\frac{\sigma  z_{i}}{\sqrt{g_{1}^{2}+...+g_{p}^{2}}})^{2})\right] \leq 
\prod_{i=1}^{p} \exp\left((\sqrt{p}\frac{\sigma  z_{i}}{\sqrt{g_{1}^{2}+...+g_{p}^{2}}})^{2}\right)
\end{equation}

Thus, we can mimick the analysis from the proof of Theorem 2.1 in \cite{dotpr}, but with strictly better upper bounds for the moment generating functions involved, and after standard algebraical transformations, obtain:
\begin{equation}
\mathbb{P}[(R^{\mathrm{ort}}x)^{\top}R^{\mathrm{ort}}y < m\sigma^{2}x^{\top}y-\epsilon m\sigma^{2}\|x\|_{2}\|y\|_{2}] < (1-\frac{2\rho(\epsilon)}{p+2})\exp(-\frac{m\epsilon^{2}}{8}).
\end{equation}

That proves Inequality \ref{lemma-ineq-1}.
The proof of Inequality \ref{lemma-ineq-2} is completely analogous, but this time $\rho(\epsilon)$ is replaced by $1-\rho(\epsilon)$.
\end{proof}

\subsection{Proof of Lemma \ref{lemma:1}} \label{app:lemma1proof}

\begin{proof}
For now, fix $1 \leq i, j_1, j_2 \leq L$, $j_1 \neq j_2$. We apply Theorem \ref{ortho-theorem} to $p = L$, $m = d/2$, $R^\mathrm{ort} = Y^\top$, $x = D_i, y = V_{j_1} - V_{j_2}$, $\sigma = \sqrt{2 / m}$. As the result, for any $0 < \epsilon < 1$ we have:
\begin{gather} 
    \mathbb{P} \left( \left| (X^{(1)}_i)^\top (X^{(2)}_{j_1} - X^{(2)}_{j_2}) - (D_i)^\top (V_{j_1} - V_{j_2} ) \right| > \epsilon \| D_i \|_2 \| V_{j_1} - V_{j_2} \|_2 \right) \\
    < 2 \left( 1 - \frac{1}{L + 2} \right) \exp\left( - \frac{\epsilon^2 d}{16} \right), \label{eq:dotpr1}
\end{gather}
where $\| \cdot \|_2$ denotes the Euclidean vector norm.

It is known that $\| B \|_2^2 = \sigma_1^2 = \rho (B B^\top)$, where $\rho (\cdot)$ denotes the spectral radius of the argument matrix. Next, it holds that
\begin{equation} \label{eq:rub}
    \rho (B B^\top) \leq \| B B^\top \|_1 \leq \| B \|_1 \| B^\top \|_1 = \max_{1 \leq j \leq L} \sum_{i = 1}^L | B_{i,j} | \cdot \max_{1 \leq i \leq L} \sum_{j = 1}^L | B_{i,j} |,
\end{equation}
where $\| \cdot \|_1$ is the induced 1-norm. The first transition in (\ref{eq:rub}) holds for spectral radius and induced norm of any matrix, while the second transition is due to submultiplicativity of the 1-norm.

Observe, that all elements of $B$ are nonnegative. Indeed,
\begin{equation*}
    \log A_{i,j} - \log A^{\min nz}_i - \log \epsilon_1 + \epsilon_2 \geq - \log \epsilon_1 + \epsilon_2 > 0,
\end{equation*}
where the first transition is due to $A_{i,j} \geq A^{\min nz}_i$, and the second is due to $\epsilon_1 < 1, \epsilon_2 > 0$.

In each row of $B$ there are up to $k$ nonzero elements. For each nonzero element $B_{i, j}$ it holds that
\begin{equation*}
    0 < B_{i, j} = \log (A_{i, j} / (A^{\min nz}_i \epsilon_1)) + \epsilon_2 \leq \log ( \gamma / \epsilon_1 ) + \epsilon_2 \leq \max (\log ( \gamma / \epsilon_1 ) + \epsilon_2, 1).
\end{equation*}
Therefore, we deduce that
\begin{equation} \label{eq:rub2}
    \max_{1 \leq i \leq L} \sum_{j = 1}^L | B_{i,j} | \leq k \max (\log ( \gamma / \epsilon_1 ) + \epsilon_2, 1) .
\end{equation}
Analogously it is shown that
\begin{equation} \label{eq:rub3}
    \max_{1 \leq j \leq L} \sum_{i = 1}^L | B_{i,j} | \leq k \max (\log ( \gamma / \epsilon_1 ) + \epsilon_2, 1) .
\end{equation}
Finally, we combine (\ref{eq:rub},\ref{eq:rub2},\ref{eq:rub3}) and deduce that
\begin{equation} \label{eq:sub}
    \sigma_L \leq \dots \leq \sigma_1 \leq k \max (\log ( \gamma / \epsilon_1 ) + \epsilon_2, 1) .
\end{equation}

We have:
\begin{equation*}
    \| U_i \|_2 = 1, \quad \| V_{j_1} - V_{j_2} \|_2 = \sqrt{\| V_{j_1} \|_2^2 + \| V_{j_2} \|_2^2 - 2 (V_{j_1})^\top V_{j_2}} = \sqrt{\| V_{j_1} \|_2^2 + \| V_{j_2} \|_2^2} = \sqrt{2}
\end{equation*}
due to orthogonality of $U, V$. Hence,
\begin{equation} \label{eq:rub4}
    \| D_i \|_2 \| V_{j_1} - V_{j_2} \|_2 = \sigma_i \| U_i \|_2 \| V_{j_1} - V_{j_2} \|_2 = \sqrt{2} \sigma_i \leq \sqrt{2} k \max (\log ( \gamma / \epsilon_1 ) + \epsilon_2, 1).
\end{equation}
Hence, we can replace $\| D_i \|_2 \| V_{j_1} - V_{j_2} \|_2$ in (\ref{eq:dotpr1}) by the right hand side from (\ref{eq:rub4}):
\begin{gather}
    \mathbb{P} \left( \left| (X^{(1)}_i)^\top (X^{(2)}_{j_1} - X^{(2)}_{j_2}) - (D_i)^\top (V_{j_1} - V_{j_2} ) \right| > \sqrt{2} \epsilon k \max (\log (\gamma / \epsilon_1) + \epsilon_2, 1) \right) \\
    < 2 \left( 1 - \frac{1}{L + 2} \right) \exp\left( - \frac{\epsilon^2 d}{16} \right). \label{eq:dotpr3}
\end{gather}

Our next step is to set
\begin{equation*}
    \epsilon = \frac{\epsilon_2}{\sqrt{2}} k^{-1} \max (\log (\gamma / \epsilon_1) + \epsilon_2, 1)^{-1}
\end{equation*}
and to write down a union bound for (\ref{eq:dotpr3}) over all $L^2 (L - 1)$ tuples of $(i, j_1, j_2)$ such that $1 \leq i, j_1, j_2 \leq L$, $j_1 \neq j_2$:
\begin{gather}
    \mathbb{P} \left( \exists i, j_1 \neq j_2 : \left| (X^{(1)}_i)^\top (X^{(2)}_{j_1} - X^{(2)}_{j_2}) - (D_i)^\top (V_{j_1} - V_{j_2} ) \right| > \frac{\epsilon_2}{2} \right) \\
    < \sum_{1 \leq i, j_1, j_2 \leq L, j_1 \neq j_2} \mathbb{P} \left( \left| (X^{(1)}_i)^\top (X^{(2)}_{j_1} - X^{(2)}_{j_2}) - (D_i)^\top (V_{j_1} - V_{j_2} ) \right| > \frac{\epsilon_2}{2} \right) \nonumber \\
    < 2 L^2 (L - 1) \left( 1 - \frac{1}{L + 2} \right) \exp\left( - \frac{1}{32} \epsilon_2^2 k^{-2} \max (\log (\gamma / \epsilon_1) + \epsilon_2, 1)^{-2} d \right) . \label{eq:pb}
\end{gather}

From the definition of $X, W_\mathcal{Q}, W_\mathcal{K}, D, V$ we know that $X W_\mathcal{Q} W_\mathcal{K}^\top X^\top = X^{(1)} (X^{(2)})^\top, D V^\top = B$. We combine (\ref{eq:dhidb},\ref{eq:pb}) and conclude the proof by observing that
\begin{align*}
    &\mathbb{P} \left( \forall 1 \leq i, j_1, j_2 \leq L, j_1 \neq j_2: \left| (X W_\mathcal{Q} W_\mathcal{K}^\top X^\top)_{i,j_1} - (X W_\mathcal{Q} W_\mathcal{K}^\top X^\top)_{i,j_2} - B_{i,j_1}  + B_{i,j_2} \right| < \epsilon_2 / 2 \right) \\
    &= 1 - \mathbb{P} \left( \exists 1 \leq i, j_1, j_2 \leq L, j_1 \neq j_2 : \left| (X^{(1)}_i)^\top (X^{(2)}_{j_1} - X^{(2)}_{j_2}) - (D_i)^\top (V_{j_1} - V_{j_2} ) \right| > \frac{\epsilon_2}{2} \right) \\
    &> 1 - 2 L^2 (L - 1) \left( 1 - \frac{1}{L + 2} \right) \exp\left( - \frac{1}{32} \epsilon_2^2 k^{-2} \max (\log (\gamma / \epsilon_1) + \epsilon_2, 1)^{-2} d \right) \\
    &\geq 1 - \left( 1 - \frac{1}{L + 2} \right) = \frac{1}{L + 2} .
\end{align*}
\end{proof}

\section{Additional experimental details and results} \label{app:expdet}

We use Tensorflow \cite{tensorflow} and a single NVIDIA P100 GPU for all experiments. To find empirical $d_{\min} (\epsilon_1, \epsilon_2)$, we gradually increase $d$ from $d_{lower}$ to $d_{upper}$ using a uniform grid of $30$ values:
\begin{itemize}
    \item When $k = 1$, we set $(d_{lower}, d_{upper}) = (200, 600)$.
    \item When $k = 2$, we set $(d_{lower}, d_{upper}) = (300, 1000)$.
\end{itemize}

Values of $d_{lower}, d_{upper}$ are selected so that on the whole range of $L$'s $d_{lower}$ is yet not enough to produce good enough $M$, while $d_{upper}$ is already enough.

Figure \ref{fig:amaps} illustrates other versions of Figure \ref{fig:amap} with different random seeds.

\begin{figure}
    \centering
    \includegraphics[width=\textwidth]{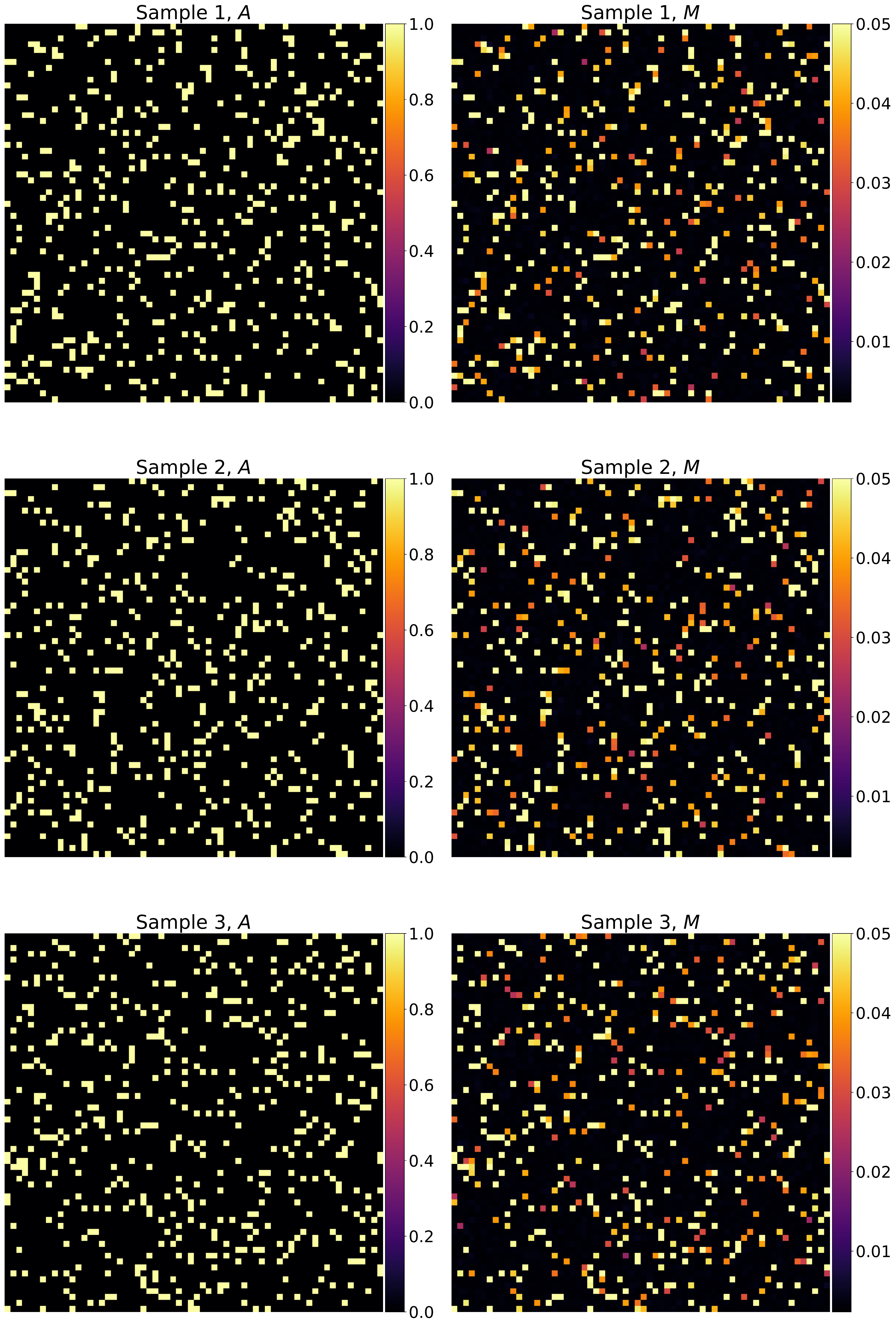}
    \caption{Other versions of Figure \ref{fig:amap} generated with different random seeds.}
    \label{fig:amaps}
\end{figure}

\end{document}